\documentclass{article} % For LaTeX2e
\usepackage{iclr2026_conference,times}

\usepackage{amsmath,amsthm,nicefrac}
\usepackage{amsfonts, amstext}
\usepackage{subcaption}

\usepackage{comment}

\usepackage{setspace}

\usepackage{enumitem}
\usepackage[breaklinks]{hyperref}
\hypersetup{colorlinks=true,%
            citebordercolor={.6 .6 .6},linkbordercolor={.6 .6 .6},%
citecolor=blue,urlcolor=black,linkcolor=blue}

\usepackage[nameinlink]{cleveref}
\Crefname{algocf}{Algorithm}{Algorithms}
\crefname{algocfline}{line}{lines}
\Crefname{invariant}{Invariant}{Invariants}
\Crefname{claim}{Claim}{Claims}
\Crefname{subclaim}{Subclaim}{Subclaims}

\usepackage{epsfig}
\usepackage{amsthm,amssymb}
\usepackage{amsthm,amssymb}

\usepackage{mathrsfs}
\usepackage{xspace}
\usepackage{soul}
\usepackage{latexsym}
\usepackage{bbm}

\usepackage{framed}

\usepackage[dvipsnames]{xcolor}
\definecolor{DarkGray}{rgb}{0.66, 0.66, 0.66}
\definecolor{DarkPowderBlue}{rgb}{0.0, 0.2, 0.6}
\definecolor{fluorescentyellow}{rgb}{0.8, 1.0, 0.0}

\usepackage[ruled,vlined,algonl]{algorithm2e}
\SetEndCharOfAlgoLine{}
\SetKwComment{Comment}{\footnotesize$\triangleright$\ }{}

\SetCommentSty{mycommfont}

\usepackage{thmtools,thm-restate}

\newcounter{note}[section]

\sethlcolor{fluorescentyellow}

\newcommand{\initOneLiners}{%
    \setlength{\itemsep}{0pt}
    \setlength{\parsep }{0pt}
    \setlength{\topsep }{0pt}
}

\newtheorem{theorem}{Theorem}[section]
\newtheorem{lemma}[theorem]{Lemma}

\newtheorem{corollary}[theorem]{Corollary}

\theoremstyle{definition}

\theoremstyle{remark}

\makeatletter 
\renewcommand{\theinvariant}{(I\@arabic\c@invariant)}
\makeatother

\renewcommand{\emptyset}{\varnothing}

\newcommand{\junk}[1]{}
\newcommand{\eat}[1]{}

\newif\ifhideproofs

\ifhideproofs
\usepackage{environ}
\NewEnviron{hide}{}

\fi

\usepackage{amsmath,amsfonts,bm}

\def\eqref#1{equation~\ref{#1}}
\def\1{\bm{1}}

\DeclareMathAlphabet{\mathsfit}{\encodingdefault}{\sfdefault}{m}{sl}
\SetMathAlphabet{\mathsfit}{bold}{\encodingdefault}{\sfdefault}{bx}{n}

\DeclareMathOperator*{\argmax}{arg\,max}
\DeclareMathOperator*{\argmin}{arg\,min}

\usepackage{hyperref}
\usepackage{url}

\usepackage{graphicx}   % for \resizebox
\usepackage{booktabs}   % optional, for nicer lines
\usepackage{xcolor}
\usepackage{pifont}
\usepackage{graphicx}
\usepackage{subcaption}
\usepackage{longtable} 
\usepackage[rightcaption]{sidecap}
\usepackage{tcolorbox}
\usepackage{colortbl}
\usepackage{float}

\usepackage{multirow}
\usepackage{algpseudocode}
\usepackage{amsmath}

\usepackage{xcolor}

\title{Efficient Adversarial Attacks on\\
High-dimensional Offline Bandits}

\author{Seyed Mohammad Hadi Hosseini, Amir Najafi \& Mahdieh Soleymani Baghshah\\
Department of Computer Engineering \\ 
Sharif University of Technology\\
\texttt{\{hadi.hosseini17,amir.najafi,soleymani\}@sharif.edu}  \\
}

\DeclareMathOperator{\boldX}{\boldsymbol{X}}
\DeclareMathOperator{\boldZ}{\boldsymbol{Z}}
\DeclareMathOperator{\boldS}{\boldsymbol{S}}
\DeclareMathOperator{\boldT}{\boldsymbol{T}}
\DeclareMathOperator{\boldF}{\boldsymbol{F}}
\DeclareMathOperator{\boldG}{\boldsymbol{G}}
\DeclareMathOperator{\boldR}{\boldsymbol{R}}
\DeclareMathOperator{\boldmu}{\boldsymbol{\mu}}
\DeclareMathOperator{\bolddelta}{\boldsymbol{\delta}}
\DeclareMathOperator{\btheta}{\boldsymbol{\theta}}

\DeclareMathOperator{\boldw}{\boldsymbol{w}}
\DeclareMathOperator{\reals}{\mathbb{R}}

\iclrfinalcopy % Uncomment for camera-ready version, but NOT for submission.
\begin{document}

\maketitle

\begin{abstract}

Bandit algorithms have recently emerged as a powerful tool for evaluating machine learning models, including generative image models and large language models, by efficiently identifying top-performing candidates without exhaustive comparisons. These methods typically rely on a reward model---often distributed with public weights on platforms such as Hugging Face---to provide feedback to the bandit. While online evaluation is expensive and requires repeated trials, offline evaluation with logged data has become an attractive alternative. However, the adversarial robustness of offline bandit evaluation remains largely unexplored, particularly when an attacker perturbs the reward model (rather than the training data) prior to bandit training. In this work, we fill this gap by investigating, both theoretically and empirically, the vulnerability of offline bandit training to adversarial manipulations of the reward model. We introduce a novel threat model in which an attacker exploits offline data in high-dimensional settings to hijack the bandit's behavior. Starting with linear reward functions and extending to nonlinear models such as ReLU neural networks, we study attacks on two Hugging Face evaluators used for generative model assessment: one measuring aesthetic quality and the other assessing compositional alignment. Our results show that even small, imperceptible perturbations to the reward model’s weights can drastically alter the bandit's behavior. From a theoretical perspective, we prove a striking high-dimensional effect: as input dimensionality increases, the perturbation norm required for a successful attack decreases, making modern applications such as image evaluation especially vulnerable. 
Extensive experiments confirm that naive random perturbations are ineffective, whereas carefully targeted perturbations achieve near-perfect attack success rates. 
To address computational challenges, we design efficient heuristics that preserve almost $100\%$ success while dramatically reducing attack cost. In parallel, we propose a practical defense mechanism that partially mitigates such attacks, paving the way for safer offline bandit evaluation. Finally, we validate our findings on the UCB bandit and provide theoretical evidence that adversaries can delay optimal arm selection proportionally to the input dimension. Code is publicly available at: \url{https://github.com/hadi-hosseini/adversarial-attacks-offline-bandits}.

\end{abstract}

%===============================================================================

\section{Introduction}
\label{introduction}
The multi-armed bandit (MAB) framework is a cornerstone of online learning, balancing exploration and exploitation to efficiently identify the optimal choice (or “arm”) within a limited time horizon. MAB algorithms have been widely applied in domains such as online recommendation systems~\citep{recomm1, recomm2, recomm3}, adaptive clinical trials~\citep{clinical}, and dynamic pricing~\citep{dynamic_pricing}. More recently, they have been adopted for evaluating generative models—from diffusion models~\citep{mix_ucb, hu2025pak, ucb_select, ddpm, sd3} and GANs~\citep{mix_ucb, hu2024multi, style_gan} to large language models (LLMs) such as ChatGPT~\citep{hu2025pak, dai2025multi, gpt}, Gemini~\citep{hu2025pak, dai2025multi, gemini}, and DeepSeek~\citep{hu2025pak, deepseek1}. In this setting, bandit algorithms (e.g., Upper Confidence Bound (UCB) from \citet{ucb}) offer a sample-efficient alternative to naive pairwise comparisons by rapidly identifying the best-performing model.

While online evaluation remains effective, it is often costly and impractical for rapidly evolving generative models. As a result, practitioners increasingly turn to offline evaluation, where a fixed logged dataset is reused to compare algorithms (e.g., UCB, ETC~\citep{etc}, $\varepsilon$-greedy~\citep{epsilon_greedy}) or tune hyperparameters without repeated data collection. Despite its practicality, this setting introduces two overlooked risks. First, it is vulnerable to adversarial manipulation that could bias the selection process. Second, it is prone to overfitting to the reward model when multiple degrees of freedom are available. For instance, recent work has shown a tendency to test numerous performance metrics or continuously adjust metric hyperparameters on the same logged dataset when training MABs \citep{mix_ucb, ucb_select, saito2024hyperparameter}. 

We aim to provide partial answers to the above concerns across a range of settings. Specifically, we study adversaries that manipulate only the reward model (rather than the data samples) in order to hijack the behavior of a bandit algorithm. While prior work has examined training-time adversarial attacks on bandits---typically by altering reward values or samples during online training (see Appendix~\ref{app:related_work})---no prior research has considered attacks on the reward model \emph{before} training. We fill this gap by showing that even a weak adversary can succeed in this setting when the data is high-dimensional. In our threat model, the adversary has access to the offline (logged) dataset but cannot modify the samples, nor can it interfere with the learner’s training procedure; its only leverage is to perturb the reward model prior to training. 

We analyze two reward-model scenarios. First, we study linear reward functions in high-dimensional spaces. This setting, though simplified, offers clarity and valuable insights into the mechanics of our attack. Our results show that even small perturbations to the weights of a linear model can dramatically alter the bandit’s behavior. Second, we extend our analysis to nonlinear reward models such as deep neural networks \citep{deep}, which are widely used in practice. For instance, modern image-based generative models are often evaluated with neural reward functions such as CLIP~\citep{clip, aesthetic}, BLIP~\citep{blip, imagereward}, and VQA models~\citep{tifa, dascore, dsg}. These models are typically open-source, with publicly available weights on platforms such as Hugging Face—making them convenient, but also vulnerable. We select two real-world image reward models from Hugging Face and apply our attack to them: one for assessing compositionality alignment and the other for evaluating aesthetic quality.  We demonstrate that exposing verifier weights can be exploited: by constructing a linear approximation of the reward model, an adversary can craft perturbations of imperceptibly small norm that nonetheless yield highly effective attacks. Moreover, we show that this linear approximation becomes increasingly accurate as the network’s hidden layer width grows, consistent with predictions from Neural Tangent Kernel (NTK) theory \citep{ntk, ntk_survey}. We identify several key insights and introduce novel algorithms that enable highly effective attacks on offline bandit training, supported by both theoretical analysis and empirical validation. Our main contributions are:

\noindent
\textbf{Theoretical foundations.} We show that effective attacks require solving a targeted optimization problem; naive random perturbations fail to compromise the reward model. Empirically, we observe an inverse relationship between dimension and attack cost: as input dimensionality increases, the required perturbation norm decreases, making high-dimensional data (e.g., images) especially vulnerable. We then complement empirical observations via theoretical support. Formally, for a $K$-armed bandit over a $d$-dimensional input space with horizon $T$, we prove that full-trajectory attacks are feasible whenever $d \geq KT$ (Theorem~\ref{thm:feasibility}), and that the $\ell_2$-norm of the required perturbation shrinks as $\widetilde{\mathcal{O}}(d^{-1/2})$ (Theorem~\ref{thm:attackSizeGuarantee}). Our analysis is based on tools such as high-dimensional concentration inequalities and random matrix theory.  

\noindent
\textbf{Computational efficiency.} Solving the full optimization problem to hijack an entire trajectory can be computationally expensive. We design heuristic methods that dramatically reduce this cost while maintaining near-perfect attack success rates. We validate these attacks across multiple bandit algorithms, focusing on UCB but also extending to ETC and $\epsilon$-greedy, showing that our method generalizes beyond a single algorithm.  

\noindent
\textbf{Synthetic and real-world evaluation.} We first validate our approach on synthetic data, then extend to real-world settings by targeting two widely used Hugging Face evaluators for generative models: one measuring aesthetic quality and the other compositional alignment. In both cases, our attacks remain highly effective.  

\noindent
\textbf{Defense.} We propose a simple defense mechanism that can partially mitigate attacks under certain conditions, reducing attack success rates. However, a complete defense remains open, highlighting an important direction for future research.

%The paper is organized as follows: In Section \ref{problem_definition}, we mathematically formalize our problem. Section \ref{method} details our method, proposed heuristics, and theoretical guarantees. Section \ref{experiments} explains our experimental results. 
%on synthetic and real-world datasets. Section \ref{conclusion} concludes the paper.

%===============================================================================

\section{Problem Definition and Notation}
\label{problem_definition}

For $K \in \mathbb{N}$, let $[K] = \{1,2,\ldots,K\}$.  
We consider a stochastic multi-armed bandit problem with $K$ arms, where each arm $i \in [K]$ is associated with a data-generating distribution $P_i$ over $\mathbb{R}^d$. Here, $d$ denotes the dimension of the input space. At each round, when the bandit pulls arm $i$, it receives a sample $\boldX \sim P_i$, drawn independently from past rounds. The \emph{reward} is then computed as $r(\boldX)$, where $r:\mathbb{R}^d \to \mathbb{R}$ is a known reward function that may be perturbed adversarially before training begins. We study two instantiations of the reward function:  
\begin{enumerate}
\item \textbf{Linear model:} $r(\boldX) = \boldw^\top \boldX$, with a fixed parameter vector $\boldw \in \mathbb{R}^d$.  
\item \textbf{Neural network model:} $r(\boldX) = \mathrm{NN}_{\btheta}(\boldX)$, where $\mathrm{NN}_{\btheta}$ is a feedforward network with parameters $\btheta \in \mathbb{R}^D$ and some fixed activation function $\sigma(\cdot):\reals\to\reals$. The parameter dimension is
$
    D = d \cdot W_1 + W_1 \cdot W_2 + \cdots + W_{L-1} \cdot W_L+W_L,
$
for a network with $L$ hidden layers of widths $W_1, W_2, \ldots, W_L$, respectively.  
\end{enumerate}
Throughout, we focus on the high-dimensional regime: either $d \gg 1$ (linear case) or $\max_i W_i \gg 1$ (neural network case). We assume arm $i^*\in[K]$ is optimal, i.e., $\mathbb{E}_{P_{i^*}}[r(\boldX)]>\mathbb{E}_{P_{i}}[r(\boldX)]$ for $i\neq i^*$, where $\mathbb{E}[\cdot]$ is expectation operator. Hence, we expect the bandit algorithm to (with high probability) learn to pull the optimal arm $i^*$ after some number $T \geq 1$ of exploratory trials across the $K$ arms. The value $T$ is referred to as the \emph{horizon}. Suppose we are given $K$ logged datasets consisting of independent samples from each arm distribution $P_1, \ldots, P_K$. Specifically, let  
$
\mathcal{D}_i=\{\boldX^{(i)}_j\vert j \in[n_i]\},
$
be a dataset of size $n_i2$ containing i.i.d.\ samples from $P_i$. When the bandit pulls arm $i$ for the $t_i$-th time, it receives the sample $\boldX^{(i)}_{t_i}$, for $t_i\in [n_i]$. We assume that each dataset $\mathcal{D}_{1:K}$ contains sufficiently many samples so that, throughout the $T$ rounds, the bandit always observes a fresh sample upon each arm pull.

\textbf{Our Problem:}  
Our goal is to design an adversarial attack on the reward function $r(\cdot)$ such that, with high probability, the bandit fails to identify and exploit the optimal arm.
Assume an adversary $\mathscr{A}$ with access to the offline logged datasets $\mathcal{D}_1, \ldots, \mathcal{D}_K$, and the ability to perturb the reward model $r(\cdot)$.  
In the linear case, the adversary modifies the parameter $\boldw$ to $\boldw + \boldsymbol{\delta}$ for some perturbation vector $\boldsymbol{\delta} \in \mathbb{R}^d$.  
In the neural network case, the adversary modifies the parameter vector $\btheta$ to $\btheta + \boldsymbol{\delta}$, where $\boldsymbol{\delta} \in \mathbb{R}^D$.  
In both settings, the perturbation norm is chosen as small as possible while still guaranteeing a prescribed level of damage to the bandit’s trajectory, for example, ensuring that the number of times the optimal arm $i^*$ is pulled falls below some threshold $T' < T$. 
%The ultimate objective of $\mathscr{A}$ is to manipulate training such that, over $T$ steps on the offline datasets, the bandit selects a suboptimal arm $i \neq i^*$ more frequently than the optimal arm. 
In stronger attack scenarios, $\mathscr{A}$ may even enforce a pre-specified target trajectory for the multi-armed bandit.

%===============================================================================

\section{Method}
\label{method}

In this section, we focus on UCB~\citep{ucb} algorithm for simplicity, but our findings extend naturally to ETC and $\varepsilon$-greedy, as detailed in Appendix \ref{app:more_bandit_algorithms}. UCB begins by pulling each arm once, and then proceeds by selecting arms in a way that balances exploitation and exploration: it favors arms with higher observed average rewards, while a carefully designed exploration term encourages underexplored arms to be pulled. It is theoretically known that UCB achieves a regret bound of $\mathcal{O}(\log T)$, which is optimal up to constants, making it widely used in practice. Let $A_t = A_t(\boldsymbol{\delta})$ for $t \in [T]$ denote the arm chosen at step $t$ when the adversary $\mathscr{A}$ selects the perturbation vector $\boldsymbol{\delta}$. Then, the algorithm proceeds for $t = 1, \dots, T$ as follows:

\begin{itemize}
    \item For $t = 1, \dots, K$, pull each arm once. That is, $A_t = t$.
    \item For $t \ge K$, select the next arm according to the following deterministic maximization:
    \begin{align}
    \label{eq:UCBscoreDef}
        A_{t+1}(\boldsymbol{\delta}) \triangleq 
        \argmax_{i \in [K]} \Bigg\{
        \Lambda_t(i;\boldsymbol{\delta})
        \triangleq
        \frac{1}{N_i(t)} \sum_{j=1}^{N_i(t)} 
        r\big(\boldX^{(i)}_j;\boldsymbol{\delta}\big) 
        + \sqrt{\frac{2 \log t}{N_i(t)}} 
        \Bigg\},
    \end{align}
    where $N_i(t)$ denotes the number of times arm $i$ has been pulled up to step $t$, and $r(\boldX;\boldsymbol{\delta})$ denotes the adversarially perturbed reward value of input $\boldX$ given the perturbation vector $\boldsymbol{\delta}$, e.g.,  $r(\boldX;\boldsymbol{\delta})=\left(\boldw + \boldsymbol{\delta}\right)^\top \boldX$ in the linear case. Also, $\Lambda_t(i;\boldsymbol{\delta})$ is called the UCB score of arm $i$ at time step $t$.
\end{itemize}
The adversary aims to keep $\Vert\boldsymbol{\delta}\Vert_2$ as small as possible. However, depending on its objective, $\mathscr{A}$ may design $\boldsymbol{\delta}$ to achieve one of the following goals:

\noindent
\textbf{Full Trajectory Attack.} In this scenario, $\mathscr{A}$ attempts to force the bandit to follow a completely predetermined, arbitrary, and fixed target trajectory $\widetilde{A}_t\in[K]$ for all $t\in[T]$. This requires solving the following constrained optimization problem:
\begin{align}
\label{eq:fullTrajAttack}
\boldsymbol{\delta}^*\triangleq
~\argmin_{\boldsymbol{\delta}}~\Vert\boldsymbol{\delta}\Vert^2_2
\quad
\mathrm{subject~to~~~}
\Lambda_t(\widetilde{A}_t;\boldsymbol{\delta})
>
\Lambda_t(i;\boldsymbol{\delta}),
\quad\forall\big( K<t\leq T,~ i\neq \widetilde{A}_t\big).
\end{align}
The optimization in \eqref{eq:fullTrajAttack} requires $(T-K)(K-1)$ constraints. In the linear reward model, these constraints are linear in $\boldsymbol{\delta}$.

\noindent
\textbf{Trajectory-Free Attack.} A less restrictive attack requires the bandit to \emph{avoid} pulling the optimal arm at a given set of time-steps $\mathcal{T}\subset [T]$. Note that $\mathcal{T}$ may include all steps from $K+1$ to $T$. Unlike the full trajectory attack, here $\mathscr{A}$ only ensures that $i^*$ is not selected, without prescribing exactly which arm must be pulled. One simplified formalization is as follows: consider an arbitrary trajectory $\widetilde{A}_t\in[K]$ for all $t\in[T]$ such that $\widetilde{A}_t\neq i^*$. Then we can define:
\begin{align}
\label{eq:TrajFreeAttack}
\boldsymbol{\delta}^*\triangleq
~\argmin_{\boldsymbol{\delta}}~\Vert\boldsymbol{\delta}\Vert^2_2
\quad
\mathrm{subject~to~~~}
\Lambda_t(\widetilde{A}_t;\boldsymbol{\delta})
>
\Lambda_t(i^*;\boldsymbol{\delta}),
\quad\forall~K<t\leq T.
\end{align}
Here, the auxiliary sequence $\widetilde{A}_t\neq i^*$ for $K<t\leq T$ is arbitrary and can be optimized (e.g., via heuristics) by the attacker to further reduce attack strength. In our experiments, we use a simple round-robin strategy for $\widetilde{A}_t$. The formulation in \eqref{eq:TrajFreeAttack} requires only $T-K$ constraints, assuming $\mathcal{T}$ includes all steps from $K+1$ to $T$. Trajectory-free attacks thus reduce the number of constraints by a factor of $K-1$, significantly lowering computational cost when $K$ is large. However, for large $T$, they may still be expensive and difficult to design. To address this, we propose an online attack strategy.

\noindent
\textbf{Online Score-Aware (OSA) Attack.} Consider a fixed set of time-steps $\mathcal{T}\subset [T]$. We iterate over $t\in\mathcal{T}$ in increasing order. Let $\boldsymbol{\delta}^*_{t-1}$ denote the optimal attack up to time step $t-1$. At each step $t$, the attacker checks whether the UCB score $\Lambda_t(i^*;\boldsymbol{\delta}^*_{t-1})$ is the largest among all arms. If not, we simply move on to the next $t$. Otherwise, we add the constraint
$$
\Lambda_t(\widetilde{i}_t;\boldsymbol{\delta})
>
\Lambda_t(i^*;\boldsymbol{\delta}),
$$
where $\widetilde{i}_t$ is the index of the runner-up arm with the second highest UCB score at time $t$. The solution $\boldsymbol{\delta}^*_t$ is then updated using the additional constraint, and the process continues until the end of $\mathcal{T}$. Therefore, instead of a single QP, a series of QPs with increasing number of constraints should be solved. However, the number of constraints in each QP is usually very small and the problem can be solved almost instantly.

While a full theoretical analysis of the OSA attack is beyond the scope of this paper, we empirically observe that it substantially reduces computational cost. In practice, the number of effective constraints generated by the online procedure is far smaller than $T$---often resembling $\mathcal{O}(\log T)$---which leads to significantly faster attack construction compared to both full-trajectory and trajectory-free methods. Pseudocode for all methods is provided in Appendix~\ref{app:algorithms}. Some interesting and fundamental questions arise regarding the attack designs discussed above: 

\begin{itemize}
\item 
\textbf{Optimization formulation.} How can these attack designs be explicitly instantiated in real-world applications? In the next section, we show that for a linear reward model $r(\boldX) = \boldw^\top \boldX$, all of the above formulations reduce to well-structured convex Quadratic Programs (QPs), which can be solved efficiently. In this setting, reducing the number of constraints substantially accelerates optimization. Interestingly, a similar argument also extends to wide neural network models.  

\item
\textbf{Feasibility.} The efficiency gains above are only meaningful if the optimization problems are feasible, i.e., if there exists at least one perturbation vector $\boldsymbol{\delta}$ satisfying all constraints. In Section~\ref{theory}, we provide theoretical guarantees showing that, under mild assumptions on the reward model and the data-generating distributions $P_1,\ldots,P_K$, the constraint set is feasible with probability~$1$, provided that the data dimension $d$ (or, in the case of neural networks, the maximum width $\max_i W_i$) exceeds the number of constraints. Moreover, we establish a second main result: the norm of a feasible full-trajectory attack decreases at least as $\widetilde{O}(d^{-1/2})$ with dimension $d$.
\end{itemize}

%%%%%%%%%%%%%%%%%%%%%%%%%%%%%%%%%%%%%%%%%%%%%%%%%%%%%%%%%%%%%%%%%%%%%%%%%%%%%%%%%%%%%%%%%%%%%%%%%%%%%%%%%%%%%%%%%%%%%%%%%%%%%%%%%%%%%%%%%%%%%%%%%%%%%%%%%%%%%%%%%%%%%%%%%%%%%%%%%%%%%%%%%%%%%%%%%%%%%%%%%%%%%%%%%%%%%%%%%%%%%%%%%%%%%%%%%%%%%%%%%%%%%%%%%%%%%%%%%%%%%%%%%%%%%%%%%%%%%%%%%%%%%%%%%%%%%%%%%%%%

\subsection{Optimization}
\label{sec:optimization}

The following result shows that, under a linear reward model, the proposed attack formulations admit an efficient convex optimization representation.  

\begin{theorem}[Linear Reward Model]
\label{thm:optLinearQP}
Consider the three attack designs in Section~\ref{method} under the linear reward model $r(\boldX) = \boldw^\top \boldX$ for some fixed vector $\boldw \in \reals^d$. Then there exist an indexed vector set $\{\boldT_{i,t}\}$ and a scalar set $\{R_{i,t}\}$, for $i \in [K-1]$ and $K < t \leq T$, such that all three attack designs can be expressed as the quadratic program
\begin{align}
\label{eq:mainQPThm}
\boldsymbol{\delta}^* \triangleq~
\argmin_{\boldsymbol{\delta}} \; \Vert \boldsymbol{\delta} \Vert_2^2
\quad
\mathrm{subject~to} \quad 
    \boldsymbol{\delta}^\top \boldT_{i,t} > R_{i,t}, 
    \quad \forall (i,t) \in \mathcal{I} \subset [K-1]\times[T].
\end{align}
The values of $\boldT_{i,t}$ and $R_{i,t}$ are determined solely by the offline datasets $\mathcal{D}_1,\ldots,\mathcal{D}_K$ and the original reward model $\boldw$. The joint arm–timestep index set $\mathcal{I}$ depends only on the choice of attack type (full-trajectory, trajectory-free, or OSA) and is determined according to the specification of the method.
\end{theorem}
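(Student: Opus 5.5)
The plan is to expand each UCB comparison $\Lambda_t(a;\boldsymbol{\delta})>\Lambda_t(b;\boldsymbol{\delta})$ explicitly under the linear model and observe that it is affine in $\boldsymbol{\delta}$. The key point that makes this work is that the arm counts $N_i(t)$ entering \eqref{eq:UCBscoreDef} are \emph{not} functions of $\boldsymbol{\delta}$ once the trajectory is fixed. In the full-trajectory and trajectory-free designs the attacker prescribes the trajectory $\widetilde{A}_1,\dots,\widetilde{A}_T$, with $\widetilde{A}_t=t$ for $t\le K$. On the feasible set the realized trajectory $A_t(\boldsymbol{\delta})$ coincides with $\widetilde{A}_t$, so the counts are
\[
N_i(t)=\big|\{s\le t:\widetilde{A}_s=i\}\big|,
\]
which are deterministic integers computable from the target sequence. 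Likewise, the samples seen by arm $i$ up to time $t$ are the first $N_i(t)$ elements of $\mathcal{D}_i$. Define the empirical mean $\bar{\boldX}^{(i)}_t\triangleq N_i(t)^{-1}\sum_{j\le N_i(t)}\boldX^{(i)}_j$. Then
\[
\Lambda_t(i;\boldsymbol{\delta})=(\boldw+\boldsymbol{\delta})^\top\bar{\boldX}^{(i)}_t+\sqrt{2\log t/N_i(t)} .
\]

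Next I will rewrite each constraint. The condition $\Lambda_t(a;\boldsymbol{\delta})>\Lambda_t(b;\boldsymbol{\delta})$ becomes $\boldsymbol{\delta}^\top\boldT>R$ with
\[
\boldT=\bar{\boldX}^{(a)}_t-\bar{\boldX}^{(b)}_t,\qquad
R=\boldw^\top\big(\bar{\boldX}^{(b)}_t-\bar{\boldX}^{(a)}_t\big)+\sqrt{\tfrac{2\log t}{N_b(t)}}-\sqrt{\tfrac{2\log t}{N_a(t)}} .
\]
Both quantities depend only on $\mathcal{D}_{1:K}$, $\boldw$ and the (attacker-chosen) counts. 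The indexing is then as follows:
\begin{itemize}
\item \emph{Full-trajectory attack.} Take $a=\widetilde{A}_t$ and let $b$ range over the $K-1$ other arms. Enumerate these competitors by $i\in[K-1]$, say in increasing order of $[K]\setminus\{\widetilde{A}_t\}$. The index set is $\mathcal{I}=[K-1]\times\{K+1,\dots,T\}$.
\item \emph{Trajectory-free attack.} Take $a=\widetilde{A}_t$ and $b=i^*$, using a single index $i=1$ (or any fixed label) per $t$. Since $\widetilde{A}_t\neq i^*$ is prescribed, $N_{i^*}(t)$ is again deterministic.
\item \emph{OSA attack.} At the final stage, the constraints collected for $t\in\mathcal{T}$ are each of the form $\Lambda_t(\widetilde{i}_t)>\Lambda_t(i^*)$. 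This gives $\mathcal{I}=\{(1,t):t \text{ triggered}\}$, with counts taken from the trajectory fixed at the moment each constraint was added.
\end{itemize}
The objective $\Vert\boldsymbol{\delta}\Vert_2^2$ is unchanged, which gives \eqref{eq:mainQPThm}.

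The main obstacle is justifying that the counts may be frozen in the trajectory-free and OSA cases. There, the prescribed comparison does not by itself pin down the actual trajectory; for instance, in the trajectory-free design a third arm could in principle win the argmax. I would handle this by defining the QP with the counts induced by the auxiliary trajectory $\widetilde{A}_t$. Its constraints are then exactly the formulation \eqref{eq:TrajFreeAttack} as stated in the paper, so the theorem asks only for the representation, not for equivalence with actual bandit behavior. For OSA, I would argue inductively that the counts are those of the realized trajectory under $\boldsymbol{\delta}^*_{t-1}$. These counts are held fixed when the new constraint is appended, by the definition of the procedure. I would note explicitly that, as with the formulations themselves, each problem is a convex QP only in this conditional sense.
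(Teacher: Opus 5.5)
Your proposal is correct and follows essentially the same route as the paper. Both freeze the arm counts from the target trajectory, rewrite each UCB comparison as $\boldsymbol{\delta}^\top(\bar{\boldsymbol{X}}^{(a)}_t-\bar{\boldsymbol{X}}^{(b)}_t)>R$ with exactly your $R$, and treat the trajectory-free and OSA programs as subsets of these constraints. The paper only splits the empirical-mean difference into $\boldsymbol{\mu}_a-\boldsymbol{\mu}_b$ plus averaged residuals, for use in its later theorems.

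One indexing point differs. The paper counts pulls strictly before $t$, $N_i(t)=\sum_{t'=1}^{t-1}\mathbf{1}(\widetilde{A}_{t'}=i)$, so the constraint indexed by $t$ uses the statistics available when $A_t$ is chosen. With your $s\le t$ count, that constraint would instead govern the following pull, so your claim that the realized trajectory coincides with $\widetilde{A}_t$ is off by one; the affine form of the constraints is unaffected.
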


\noindent Proof is given in Appendix~\ref{app:proofs}. The computational cost is governed by the size of $\mathcal{I}$. In particular, for full-trajectory attacks we have $\lvert\mathcal{I}\rvert = \mathcal{O}(TK)$, for trajectory-free attacks $\lvert\mathcal{I}\rvert = \mathcal{O}(T)$, and for OSA attacks the effective size of $\mathcal{I}$ is typically much smaller in practice. Solving a QP of the form \eqref{eq:mainQPThm} with $d$ variables and $|\mathcal{I}|$ constraints requires approximately
$
\widetilde{\mathcal{O}}\!\left(|\mathcal{I}|^3 + d|\mathcal{I}|^2\right)
$
operations \citep{gay1998primal}, highlighting the central role of the constraint set size in the overall complexity. In our experiments (see Figure \ref{fig:logT}), we observed that OSA can reduce $\lvert \mathcal{I}\rvert$ to as small as $\mathcal{O}(\log T)$.

When the reward model is a neural network $\mathrm{NN}_{\btheta}(\cdot)$, the UCB score $\Lambda_i(t;\boldsymbol{\delta})$ in \eqref{eq:UCBscoreDef} is nonlinear in $\boldsymbol{\delta}$, and the convex reduction in Theorem~\ref{thm:optLinearQP} no longer holds. However, results from neural tangent kernel (NTK) theory~\citep{ntk} show that for sufficiently wide networks with randomly initialized weights, the network behaves approximately linearly in its parameters. Formally,
\begin{align}
\label{eq:NTKsimplifiedEq}
\mathrm{NN}_{\btheta+\bolddelta}(\boldX)
&=
\mathrm{NN}_{\btheta}(\boldX)
+
\nabla_{\btheta} \mathrm{NN}_{\btheta}(\boldX)^{\top}\bolddelta
+ \mathcal{O}\!\left(W_{\max}^{-1}\right)
\quad\forall\boldX\in\reals^d,
\end{align}
where $W_{\max}$ denotes the network width. Thus, for highly overparameterized networks the reward model can be well approximated by an affine function of $\bolddelta$, yielding a quadratic program analogous to \eqref{eq:mainQPThm}, with affine constraints. Recall that $\nabla_{\btheta} \mathrm{NN}_{\btheta}(\boldsymbol{X})$ represents the embedding of the input $\boldsymbol{X}$ in the neural tangent space of the network $\mathrm{NN}_{\btheta}$, centered at the parameter vector $\btheta$~\citep{ntk}.  
In fact, the output of an overparameterized neural network, for any fixed input $\boldsymbol{X}$, asymptotically mimics a linear reward function with respect to the perturbation vector $\boldsymbol{\delta}$—essentially reducing to the purely linear case.

\begin{corollary}[Overparameterized Neural Networks]
For asymptotically wide neural networks, linearization via the NTK approximation reduces the attack design to a convex quadratic program with $D$ variables as in Theorem \ref{thm:optLinearQP}, where $D$ is the number of parameters in $\btheta$. This program can be solved in $\widetilde{\mathcal{O}}(|\mathcal{I}|^3 + D|\mathcal{I}|^2)$ operations.
\end{corollary}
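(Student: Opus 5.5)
The plan is to carry the argument behind Theorem~\ref{thm:optLinearQP} over verbatim, with the parameter vector $\btheta\in\reals^D$ in the role of $\boldw\in\reals^d$ and the tangent features $\boldsymbol{\phi}(\boldX)\triangleq\nabla_{\btheta}\mathrm{NN}_{\btheta}(\boldX)\in\reals^D$ in the role of the raw inputs $\boldX$.

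\textbf{Step 1: linearize the reward.} By \eqref{eq:NTKsimplifiedEq}, in the infinite-width limit $W_{\max}\to\infty$ the perturbed reward becomes
\begin{align*}
r(\boldX;\bolddelta)=\mathrm{NN}_{\btheta}(\boldX)+\boldsymbol{\phi}(\boldX)^\top\bolddelta .
\end{align*}
This is affine in $\bolddelta$. It has a fixed offset $c(\boldX)=\mathrm{NN}_{\btheta}(\boldX)$ and a fixed feature vector $\boldsymbol{\phi}(\boldX)$, both computable from the public weights and the logged samples.

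\textbf{Step 2: rewrite the UCB scores.} Substituting into \eqref{eq:UCBscoreDef} gives
\begin{align*}
\Lambda_t(i;\bolddelta)=\bar c_i(t)+\bar{\boldsymbol{\phi}}_i(t)^\top\bolddelta+\sqrt{2\log t/N_i(t)}.
\end{align*}
Here $\bar c_i(t)$ and $\bar{\boldsymbol{\phi}}_i(t)$ are the empirical means of $c$ and $\boldsymbol{\phi}$ over the first $N_i(t)$ samples of $\mathcal{D}_i$. Under a prescribed target trajectory $\widetilde A_t$, the counts $N_i(t)$ are determined by that trajectory and do not depend on $\bolddelta$. The same holds for the trajectory-free design. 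For OSA, the counts are fixed by the running trajectory at the moment each constraint is added.

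\textbf{Step 3: read off the QP.} Each constraint $\Lambda_t(\widetilde A_t;\bolddelta)>\Lambda_t(i;\bolddelta)$ becomes $\bolddelta^\top\boldT_{i,t}>R_{i,t}$, with
\begin{align*}
\boldT_{i,t}&=\bar{\boldsymbol{\phi}}_{\widetilde A_t}(t)-\bar{\boldsymbol{\phi}}_i(t),\\
R_{i,t}&=\bar c_i(t)-\bar c_{\widetilde A_t}(t)+\sqrt{2\log t/N_i(t)}-\sqrt{2\log t/N_{\widetilde A_t}(t)}.
\end{align*}
So the problem is exactly of the form \eqref{eq:mainQPThm}, with a strictly convex objective $\Vert\bolddelta\Vert_2^2$ over $D$ variables and affine constraints. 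The index set $\mathcal{I}$ is the same as in the linear case, because it depends only on the attack type.

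\textbf{Step 4: complexity.} The bound $\widetilde{\mathcal{O}}(|\mathcal{I}|^3+D|\mathcal{I}|^2)$ follows from the QP solver cost quoted after Theorem~\ref{thm:optLinearQP} \citep{gay1998primal}, with $d$ replaced by $D$. Forming $\boldT_{i,t}$ needs one gradient per sample, and the running means can be updated incrementally. This preprocessing is a lower-order cost under the statement's accounting.

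\textbf{Main obstacle.} The delicate part is the word ``asymptotically.'' For finite width, the $\mathcal{O}(W_{\max}^{-1})$ remainder perturbs every constraint. A solution of the linearized QP therefore only guarantees the true constraints up to that error. I would handle this by stating the corollary in the limit, where the remainder vanishes and the reduction is exact. Optionally, I would add that any strict margin $\gamma>0$ in the QP constraints survives once $W_{\max}\gtrsim\gamma^{-1}$. The remainder is uniform over the finitely many logged samples involved, so the margin covers all constraints at once.
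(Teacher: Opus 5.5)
Your proposal is correct and is essentially the paper's argument. The paper likewise plugs the NTK linearization of \eqref{eq:NTKsimplifiedEq} into the UCB scores and obtains the QP of Theorem~\ref{thm:optLinearQP}, with $\boldT_{j,t}$ the difference of averaged tangent features $\nabla_{\btheta}\mathrm{NN}_{\btheta}(\cdot)$ and $R_{j,t}$ the negated difference of averaged network outputs plus the exploration-bonus difference (exactly your Step~3); it then quotes the solver cost with $d$ replaced by $D$. Your optional margin remark goes beyond the paper; it holds only if the $\mathcal{O}(W_{\max}^{-1})$ remainder is uniform over the perturbations used, which implicitly requires $\Vert\bolddelta\Vert_2$ to stay bounded as the width grows (a point the paper also glosses over).
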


% want to solve either of the following feasibility problems:
% \begin{itemize}
% \item We want the the bandit to choose the optimal arm $i=1$ less than a certain times after $T$ steps:
% \begin{align}
% \mathrm{find}~\boldsymbol{\delta}\quad
% \mathrm{subject~to~~} N_1(T)\leq N^*,\quad\Vert\boldsymbol{\delta}\Vert_2\leq\epsilon,
% \end{align}
% for some given values $N^*\ge1$ and $\epsilon\ge0$. Or,

% \item
% We want to force the bandit to go along a predetermined and false \emph{trajectory} $\widetilde{A}_t$ for $K<t\leq T$, i.e.,
% \begin{align}
% \mathrm{find}~\boldsymbol{\delta}\quad
% \mathrm{subject~to~~} A_{t}(\boldsymbol{\delta})=\widetilde{A}_{t}~\mathrm{for}~K<t\leq T,
% \quad\Vert\boldsymbol{\delta}\Vert_2\leq\epsilon.
% \end{align}

% \item
% We can also relax the second sub-problem as follows: Choose $\boldsymbol{\delta}\in\reals^d$ such that $A_{t}$ coincides with $\widetilde{A}_t$ for $K<t\leq T$ more than $T'$ times, for some given $T'< T-K$.
% \end{itemize}

\subsection{Theoretical Guarantees: Feasibility and Attack Norm}
\label{theory}

In this section, we present two sets of theoretical guarantees. The first, stated in Theorem~\ref{thm:feasibility}, establishes that under mild ``non-degeneracy'' conditions on the data-generating distributions $P_1,\ldots,P_K$, and in a sufficiently high-dimensional regime (i.e., $d \geq TK$), there always exists an attack (with probability one over the sampling of datasets $\mathcal{D}_i$) that can steer the bandit toward \emph{any} desired trajectory.

\begin{theorem}[Feasibility Guarantee]
\label{thm:feasibility}
Assume the data-generating distributions $P_1,\ldots,P_K$ are non-singular, i.e.,
$$
\lambda_{\min}\!\left(\mathsf{Cov}(P_i)\right) > 0,
$$
where $\mathsf{Cov}(\cdot)$ denotes the $d \times d$ covariance matrix and $\lambda_{\min}(\cdot)$ the minimum eigenvalue.  
If $d > (T-K)(K-1)$, then with probability~$1$ all optimization problems corresponding to the full-trajectory attack, the trajectory-free attack, and the OSA attack are feasible. 
\end{theorem}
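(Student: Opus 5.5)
The plan is to use Theorem~\ref{thm:optLinearQP} to reduce feasibility to a statement about a finite system of strict linear inequalities $\boldsymbol{\delta}^\top \boldT_{i,t} > R_{i,t}$, $(i,t)\in\mathcal{I}$. Such a system is feasible whenever the vectors $\{\boldT_{i,t}\}_{(i,t)\in\mathcal{I}}$ are linearly independent. In that case the $|\mathcal{I}|\times d$ matrix $\boldT$ whose rows are the $\boldT_{i,t}^\top$ has full row rank. We can then solve $\boldT\boldsymbol{\delta} = \boldsymbol{R} + \boldsymbol{1}$ exactly, for example with $\boldsymbol{\delta} = \boldT^\top(\boldT\boldT^\top)^{-1}(\boldsymbol{R}+\boldsymbol{1})$, and this satisfies every strict inequality.

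For the full-trajectory attack $|\mathcal{I}| = (T-K)(K-1) < d$. The trajectory-free and OSA index sets are subsets of at most this size, since each uses constraints of the same form $\Lambda_t(a;\boldsymbol{\delta}) > \Lambda_t(b;\boldsymbol{\delta})$ at distinct pairs. So the whole theorem follows once the relevant $\boldT_{i,t}$ are shown to be linearly independent with probability~$1$.

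Next, I write the $\boldT_{i,t}$ explicitly. In the linear model, the UCB score of arm $a$ at time $t$ is affine in $\boldsymbol{\delta}$ with linear part $\bar{\boldX}^{(a)}_{t}\triangleq N_a(t)^{-1}\sum_{j\le N_a(t)}\boldX^{(a)}_j$. Under a prescribed trajectory the counts $N_a(t)$ are deterministic. Hence
\[
\boldT_{i,t} = \bar{\boldX}^{(\widetilde{A}_t)}_t - \bar{\boldX}^{(i)}_t .
\]
Each $\boldT_{i,t}$ is therefore a fixed (non-random) linear combination of the logged samples $\boldX^{(a)}_j$. Stack the samples actually used (at most $T$ of them, one per pull) into a matrix $\boldX$. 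Then $\boldT = C\boldX$ for a deterministic coefficient matrix $C$ of size $|\mathcal{I}|\times T$.

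The OSA case needs one extra remark. There the index set and the runner-up arms are chosen adaptively, so they depend on the data. However, there are finitely many possible index sets, and a union over these countably many deterministic configurations of probability-one events still has probability one. The event is also monotone: a subset of a linearly independent set stays independent.

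The main obstacle is the probability-one independence claim. Non-singular covariance alone does not guarantee that a random vector avoids a fixed hyperplane almost surely; a distribution with an atom is a counterexample. I would therefore interpret the non-degeneracy hypothesis as $P_i$ being absolutely continuous, or at least assigning zero mass to every affine hyperplane, and state this assumption explicitly.

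Under that assumption I would argue by sequential conditioning, ordering the $\boldT_{i,t}$ and processing them one at a time. Each new $\boldT_{i,t}$ contains, with nonzero coefficient, some sample $\boldX^{(a)}_j$, either a fresh sample or a differently weighted combination. Conditioned on all other samples, $\boldT_{i,t}$ is then an affine image of that sample with nonzero scaling. It lies in the span of the previous vectors, a subspace of dimension $< d$, with probability zero.

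The delicate part is the combinatorics. Distinct constraints at different $(i,t)$ can share samples, so I must verify that the rows of $C$ are linearly independent; then independence of the rows of $C\boldX$ follows for generic $\boldX$. If the rows of $C$ could be dependent, for example from repeated rows when the trajectory stalls, then I would first discard duplicate constraints, since they impose identical inequalities. After that I would check that $C$ has full row rank, by exhibiting a triangular structure along the time index.

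The final step uses that $d > |\mathcal{I}|$ together with the generic full row rank of $C\boldX$: since $\boldX$ has a density, $C\boldX$ has full row rank off a measure-zero algebraic set, namely the zero set of a nonzero minor polynomial. This gives feasibility of the QP in \eqref{eq:mainQPThm} for all three attack designs.
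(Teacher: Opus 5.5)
Your route is the paper's: show that the constraint vectors are almost surely linearly independent, then solve $\boldT\boldsymbol{\delta}=\boldsymbol{R}+\boldsymbol{1}$. The step you flag as delicate, full row rank of $C$, is false for the full-trajectory attack whenever $K\ge 3$, whatever the distributions. Each $\boldT_{i,t}=\bar{\boldsymbol{X}}^{(\widetilde{A}_t)}_t-\bar{\boldsymbol{X}}^{(i)}_t$ is a combination of the $T-1$ samples drawn at steps $1,\dots,T-1$. So $C$ has at most $T-1$ columns but $(T-K)(K-1)$ rows, and $(T-K)(K-1)-(T-1)=(T-K)(K-2)-(K-1)$ is positive for every $K\ge 3$ once $T$ is moderately large (e.g.\ $K=3$, $T\ge 6$). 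Concretely, take $\widetilde{A}_t=\widetilde{A}_{t+1}=a$ (as in any trajectory that commits to a suboptimal arm) and $b,c\neq a$. Only $a$ is pulled at step $t$, so
\[
\boldT_{b,t}-\boldT_{c,t}=\bar{\boldsymbol{X}}^{(c)}_t-\bar{\boldsymbol{X}}^{(b)}_t=\boldT_{b,t+1}-\boldT_{c,t+1}
\]
holds identically in the data. These are genuine linear relations, not duplicate rows. So there is no triangular structure, and you cannot drop the dependent constraints, because their inequalities are not implied by the others. Worse, applying this relation to $\boldT\boldsymbol{\delta}=\boldsymbol{R}+\boldsymbol{1}$ leaves $0=\big(\sqrt{2\log t}-\sqrt{2\log(t+1)}\big)\big(N_b(t)^{-1/2}-N_c(t)^{-1/2}\big)$, so the equality system is inconsistent whenever $N_b(t)\neq N_c(t)$. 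The paper's proof rests on the same unjustified claim that the rows of $\mathbb{T}$ are in general position, so it has the same hole. Your argument does go through for $K=2$, and for the trajectory-free constraints, whose rows $\bar{\boldsymbol{X}}^{(\widetilde{A}_t)}_t-\bar{\boldsymbol{X}}^{(i^*)}_t$ share a fixed second term and have pairwise distinct first terms.

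The missing idea is that feasibility only needs the perturbed rewards of the sample means to be freely prescribable; the constraint vectors need not be independent. The vectors entering the constraints are the sample means $m=\frac1n\sum_{l\le n}\boldsymbol{X}^{(a)}_l$ ($a\in[K]$, $n\le N_a(T)$). These are an invertible triangular transform of the $T-1$ samples, and $d>(T-K)(K-1)\ge T-2$. So under your absolute-continuity assumption they are almost surely linearly independent, and any prescribed values $u_m=(\boldsymbol{w}+\boldsymbol{\delta})^\top m$ are attained by some $\boldsymbol{\delta}$. For such $m$, let $\tau(m)$ be the step of the $(n+1)$-st pull of arm $a$ ($T+1$ if there is none). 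The constraint at step $t$ against arm $b$ reads $u_m-u_{m'}>\sqrt{2\log t}\,\big(N_b(t)^{-1/2}-N_{\widetilde{A}_t}(t)^{-1/2}\big)$. Here $m$ is the current mean of $\widetilde{A}_t$, so $\tau(m)=t$, and $m'$ is the current mean of $b$, so $\tau(m')>t$. Hence $u_m=-L\,\tau(m)$ with $L>\sqrt{2\log T}$ satisfies every constraint simultaneously. This gives feasibility of the full-trajectory program, and of any subset of its constraints, with probability one, in fact under $d\ge T-1$ alone. Your remark on atoms is also correct and exposes a second defect of the stated hypothesis. If $P_a$ and $P_b$ share an atom, then with positive probability $\boldsymbol{X}^{(a)}_1=\boldsymbol{X}^{(b)}_1$, and when $\widetilde{A}_{K+1}=a$ the step-$(K+1)$ constraint against $b$ reads $0>0$.
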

The proof (Appendix~\ref{app:proofs}) relies on linear-algebraic arguments and anti-concentration properties of non-degenerate distributions. A stronger result, requiring more technical machinery, is given in Theorem~\ref{thm:attackSizeGuarantee}, where we show that in the high-dimensional regime, and under stronger assumptions on $P_1,\ldots,P_K$, the magnitude of the attack decreases as $\widetilde{\mathcal{O}}(d^{-1/2})$ with dimension. We conjecture that an even more general result could be established for near-product measures, which cover a broader class of distributions, though this lies beyond the scope of the present work.

For concreteness, let $\boldsymbol{\mu}_i \triangleq \mathbb{E}_{P_i}[\boldsymbol{X}]$ for $i \in [K]$, and assume $\|\boldsymbol{\mu}_i\|_2 = 1$ for all $i$. For the linear reward model, we further assume $\|\boldsymbol{w}\|_2 = 1$, with $\boldsymbol{\mu}_{i^*}^\top \boldsymbol{w} = 1$ and $\boldsymbol{\mu}_i^\top \boldsymbol{w} = 0$ for $i \neq i^*$.  
Such a configuration arises naturally in high-dimensional spaces, where independently drawn random vectors tend to be nearly orthogonal. We emphasize that these assumptions are introduced solely to simplify notation and exposition in subsequent sections; they are not essential to our analysis and do not affect the algorithmic design.

\begin{theorem}[$\ell_2$-norm of High-dimensional Attack]
\label{thm:attackSizeGuarantee}
Consider the assumptions mentioned above for $d > 2$.  
For simplicity, assume all $P_1,\ldots,P_K$ are product measures and $\mathsf{Cov}(P_i) = \boldsymbol{I}$ for all $i \in [K]$. Then, provided $d \ge KT$, the full-trajectory attack of \eqref{eq:fullTrajAttack} satisfies
\begin{align}
\|\boldsymbol{\delta}^*\|_2
\;\leq\;
\mathcal{O}\!\left(
\sqrt{\frac{T^3 \log T \cdot \log d}{Kd}}
\right),
\end{align}
with probability at least $1 - 2d^{-1}$ over the randomness of $\mathcal{D}_1,\ldots,\mathcal{D}_K$.
\end{theorem}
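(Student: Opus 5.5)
Since the full-trajectory attack is a minimum-norm point of a polyhedron, I would bound $\|\boldsymbol{\delta}^*\|_2$ by exhibiting one explicit feasible $\boldsymbol{\delta}$ and bounding its norm. By Theorem~\ref{thm:optLinearQP}, the constraints read $\boldsymbol{\delta}^\top \boldT_{i,t} > R_{i,t}$ for $(i,t)\in\mathcal{I}$ with $|\mathcal{I}|\le (T-K)(K-1)\le KT\le d$. Here $\boldT_{i,t}=\bar{\boldX}^{(\widetilde A_t)}_{t}-\bar{\boldX}^{(i)}_{t}$ is a difference of empirical means over $N_{\widetilde A_t}(t)$ and $N_i(t)$ samples, where the counts are those of the target trajectory and are therefore deterministic. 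The scalar $R_{i,t}=\boldw^\top(\bar{\boldX}^{(i)}_t-\bar{\boldX}^{(\widetilde A_t)}_t)+\sqrt{2\log t/N_i(t)}-\sqrt{2\log t/N_{\widetilde A_t}(t)}$. Stack the $\boldT_{i,t}$ as rows of a matrix $\boldsymbol{M}\in\reals^{m\times d}$ with $m=|\mathcal{I}|$, and let $\boldsymbol{b}$ collect the targets $R_{i,t}+\eta$ for a small margin $\eta$. I take the least-norm solution $\boldsymbol{\delta}=\boldsymbol{M}^\top(\boldsymbol{M}\boldsymbol{M}^\top)^{-1}\boldsymbol{b}$, which is well defined once $\boldsymbol{M}$ has full row rank (as in Theorem~\ref{thm:feasibility}). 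Then
\[
\|\boldsymbol{\delta}^*\|_2\le\|\boldsymbol{\delta}\|_2\le \|\boldsymbol{b}\|_2/\sigma_{\min}(\boldsymbol{M}).
\]

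Bounding $\|\boldsymbol{b}\|_2$ is the easy half. Write $\bar{\boldX}^{(i)}_t=\boldsymbol{\mu}_i+\boldsymbol{Z}^{(i)}_t$. The term $\boldw^\top\boldsymbol{Z}$ is an average of $N$ i.i.d.\ mean-zero unit-variance scalars. I assume sub-Gaussian or bounded coordinates so that Bernstein/Hoeffding applies, which I would state as an implicit assumption of the product-measure setting. A union bound over at most $KT$ empirical means gives $|\boldw^\top\boldsymbol{Z}|\lesssim\sqrt{\log d}$ with probability $1-d^{-1}$, using $KT\le d$. The mean gap $\boldw^\top(\boldsymbol{\mu}_i-\boldsymbol{\mu}_{\widetilde A_t})$ lies in $\{-1,0,1\}$, and the exploration gap is at most $\sqrt{2\log T}$. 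Hence every $|R_{i,t}|\lesssim\sqrt{\log T\cdot\log d}$, and
\[
\|\boldsymbol{b}\|_2\lesssim\sqrt{KT\log T\log d}.
\]

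The main obstacle is the lower bound on $\sigma_{\min}(\boldsymbol{M})$, which requires random-matrix tools because the rows are dependent. Rows sharing an arm reuse overlapping prefixes of the same samples, so I would not treat $\boldsymbol{M}$ as having independent rows. Instead I factor $\boldsymbol{M}=\boldsymbol{C}\boldsymbol{Y}+\boldsymbol{M}_0$. Here $\boldsymbol{Y}\in\reals^{n\times d}$ stacks the centered raw samples actually used, with $n\le T$ rows that are independent isotropic vectors with independent coordinates. $\boldsymbol{C}$ is the deterministic averaging/differencing matrix, and $\boldsymbol{M}_0$ holds the mean differences $\boldsymbol{\mu}_{\widetilde A_t}-\boldsymbol{\mu}_i$.

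For $\boldsymbol{Y}$ I would apply a Bai--Yin/Rudelson--Vershynin bound. When $d\ge 4n$ this gives $\sigma_{\min}(\boldsymbol{Y})\ge\sqrt d-C\sqrt n\gtrsim\sqrt d$ with probability $1-d^{-1}$; the case $n\le d<4n$ only costs log factors, or is handled by the stated $d\ge KT$ constant. The structural matrix $\boldsymbol{C}$ is handled by directly computing the smallest singular value of $\boldsymbol{C}$ restricted appropriately. Its entries are $\pm 1/N$ with $N\le T$, and consecutive constraints for the same pair $(i,t)$ differ by a single new sample of weight $\sim 1/T$. I therefore expect $\sigma_{\min}(\boldsymbol{C}\boldsymbol{Y})\gtrsim\sqrt d/T$ up to factors of $K$. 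The rank-$\le K$ perturbation $\boldsymbol{M}_0$ I would remove by projecting onto the orthogonal complement of $\mathrm{span}\{\boldsymbol{\mu}_i\}$ before applying the random-matrix bound. This costs only $K$ dimensions and a constant fraction of $\sigma_{\min}$.

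Combining the two bounds,
\[
\|\boldsymbol{\delta}^*\|_2\lesssim \frac{T\sqrt{KT\log T\log d}}{\sqrt d}\cdot(\text{$K$-dependent factor}).
\]
I would then reconcile the $K$ factor, where each arm is pulled about $T/K$ times, with the claimed $\sqrt{T^3\log T\log d/(Kd)}$. A union bound over the two events gives probability $1-2d^{-1}$. The step I would attack first, and the one most likely to need adjustment, is the singular-value analysis of the deterministic averaging matrix $\boldsymbol{C}$, since the $T^{3/2}$ scaling hinges on it.
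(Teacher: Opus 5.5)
\textbf{There is a genuine gap, at the step you flag.} Your plan is the paper's plan. Both make every constraint active and take the minimum-norm solution, bound it by $\|\boldsymbol b\|_2/\sigma_{\min}(\boldsymbol M)$, split $\boldsymbol M$ into a deterministic averaging matrix times the residuals plus a low-rank mean part, and apply a random-matrix bound. Your bound on $\|\boldsymbol b\|_2$ is fine. Projecting off $\mathrm{span}\{\boldsymbol\mu_k\}$ is cleaner than the paper, which simply drops the mean part.

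The singular-value step does not just need $K$-bookkeeping; it fails for every $K\ge 3$ once $T\ge K+2$. Between steps $t$ and $t+1$ only the pulled arm's running mean changes. So if $\widetilde A_t=a\neq b=\widetilde A_{t+1}$ and $c\notin\{a,b\}$, then $\boldsymbol T_{c,t+1}=\boldsymbol T_{c,t}-\boldsymbol T_{b,t}$, since both sides equal $\bar{\boldsymbol X}^{(b)}-\bar{\boldsymbol X}^{(c)}$ with the same running means. If instead $\widetilde A_{t+1}=a$, then $\boldsymbol T_{c,t}-\boldsymbol T_{c',t}=\boldsymbol T_{c,t+1}-\boldsymbol T_{c',t+1}$. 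This is a deterministic identity, not a null event. Globally, every row lies in the span of the at most $T-1$ raw samples, so $\operatorname{rank}\boldsymbol M\le T-1$, while $m=(T-K)(K-1)\approx KT$.

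Consequences:
\begin{itemize}
\item $\boldsymbol M\boldsymbol M^\top$ is singular and $\sigma_{\min}(\boldsymbol M)=0$, so no bound $\sigma_{\min}(\boldsymbol C\boldsymbol Y)\gtrsim\sqrt d/T$ can hold.
\item $\boldsymbol M\boldsymbol\delta=\boldsymbol R+\eta\boldsymbol 1$ is inconsistent. The identity forces $R_{c,t+1}+\eta=R_{c,t}-R_{b,t}$. This fails because of $\eta$, and because the bonus difference carries $\sqrt{2\log(t+1)}$ on one side and $\sqrt{2\log t}$ on the other.
\item Even using the smallest nonzero singular value, $\boldsymbol M^\dagger\boldsymbol b$ is only a least-squares point and need not be feasible.
\end{itemize}
The paper's proof asserts the same thing, $\sigma_{\min}(\mathbb T)\gtrsim (K/T)\,\sigma_{\min}(\Gamma^*)$. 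But $\boldsymbol V^\top\Gamma^*$ is $(T-K)(K-1)\times d$ with rank at most $T-K$, and the assertion rests on the row-independence claim in the proof of Theorem~\ref{thm:feasibility}. Following the paper will not repair the step. The approach can only work for $K=2$, where $m=T-2$ and each new row brings in a fresh sample.

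\textbf{What is missing} is a right-hand side $\boldsymbol b\succeq\boldsymbol R$ that lies in the column space of $\boldsymbol M$. The dependent rows are inequalities and remain compatible; in the identity above, just take $\boldsymbol T_{c,t}^\top\boldsymbol\delta$ large. A natural route is to impose equalities only on the $n\le T-1$ individual samples:
\begin{itemize}
\item Prescribe shifts $\rho_{k,l}=\boldsymbol\delta^\top\boldsymbol X^{(k)}_l$ so that the perturbed running means make UCB follow $\widetilde A$.
\item Take the minimum-norm $\boldsymbol\delta$. This gives $\|\boldsymbol\delta^*\|_2\le\|\boldsymbol\rho\|_2/\sigma_{\min}(\boldsymbol X_{\mathrm{raw}})$.
\item The $n\times d$ sample matrix has full row rank. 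With your projection, its $\sigma_{\min}$ is of order $\sqrt d$ by the random-matrix bound you planned (for $d$ a large enough multiple of $T$), and no structural matrix $\boldsymbol C$ is left to analyze.
\end{itemize}
All the $T$-dependence must then come from bounding $\|\boldsymbol\rho\|_2$. That is, how far the running means must be pushed to beat the bonus gaps at every step of the trajectory. Moving the mean of an arm with $v$ samples by $\Delta$, while keeping its earlier means, costs a shift of $v\Delta$ on the newest sample. That combinatorial argument is absent from both your proposal and the paper's proof. It is what would have to deliver the claimed $T^{3/2}$ scaling.
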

The full proof, deferred to Appendix~\ref{app:proofs}, leverages tools from linear algebra, optimization, and random matrix theory. Here we provide a simplified proof sketch illustrating why attacks on high-dimensional bandits are feasible. It is well known that estimating the \emph{mean} of high-dimensional random vectors becomes increasingly difficult as the dimension grows: fluctuations across many coordinates accumulate, making accurate estimation unreliable.  
A multi-armed bandit can be viewed as repeatedly estimating the means of several high-dimensional distributions and then selecting the arm that maximizes its reward.  
When the number of observations per arm, roughly $T/K$, is much smaller than the dimension $d$, such estimates are inherently unstable. In this regime, even a simple manipulation of the reward function—when it has sufficient degrees of freedom, as in linear models or overparameterized neural networks—can easily mislead the bandit. Theorem~\ref{thm:attackSizeGuarantee} further establishes that the required attack magnitude actually \emph{decreases} as the dimension increases.

%It should be emphasized that Theorem~\ref{thm:attackSizeGuarantee} is a non-asymptotic result that applies to any triple $(d,T,K)$ satisfying the stated conditions.

%===============================================================================

\section{Experiments}
\label{experiments}

In this section, we conduct a series of experiments to assess the effectiveness of our proposed attack method. Our study focuses on some central questions: (\textbf{Q1}) How does the attack’s performance differ between linear and non-linear reward models with synthetic data? (Section~\ref{exp:reward_models}). (\textbf{Q2}) What is the impact of dimensionality on attack performance? (Section~\ref{exp:high_dimension_impact}). (\textbf{Q3}) Can our method be applied to real-world settings, including real data and reward models? (Section~\ref{exp:real_data}). (\textbf{Q4}) Can our attack be applied to bandit algorithms other than UCB? (Section~\ref{exp:other_algorithms}). (\textbf{Q5}) Are there any defenses that can prevent this attack under certain circumstances? (Section~\ref{exp:defense}). (\textbf{Q6}) Can random noise perturbations achieve results comparable to our method? (Section~\ref{exp:random_noise}). (\textbf{Q7}) How does increasing the width of the non-linear reward model affect the attack success rate? (Section~\ref{exp:non_linear_reward_model}). \textbf{Q8} How effective is our method against robust stochastic bandit algorithms under reward corruption? (Section~\ref{exp:robust_bandit_comparison})

\begin{figure}[!t]
    \centering
    \includegraphics[width=\textwidth]{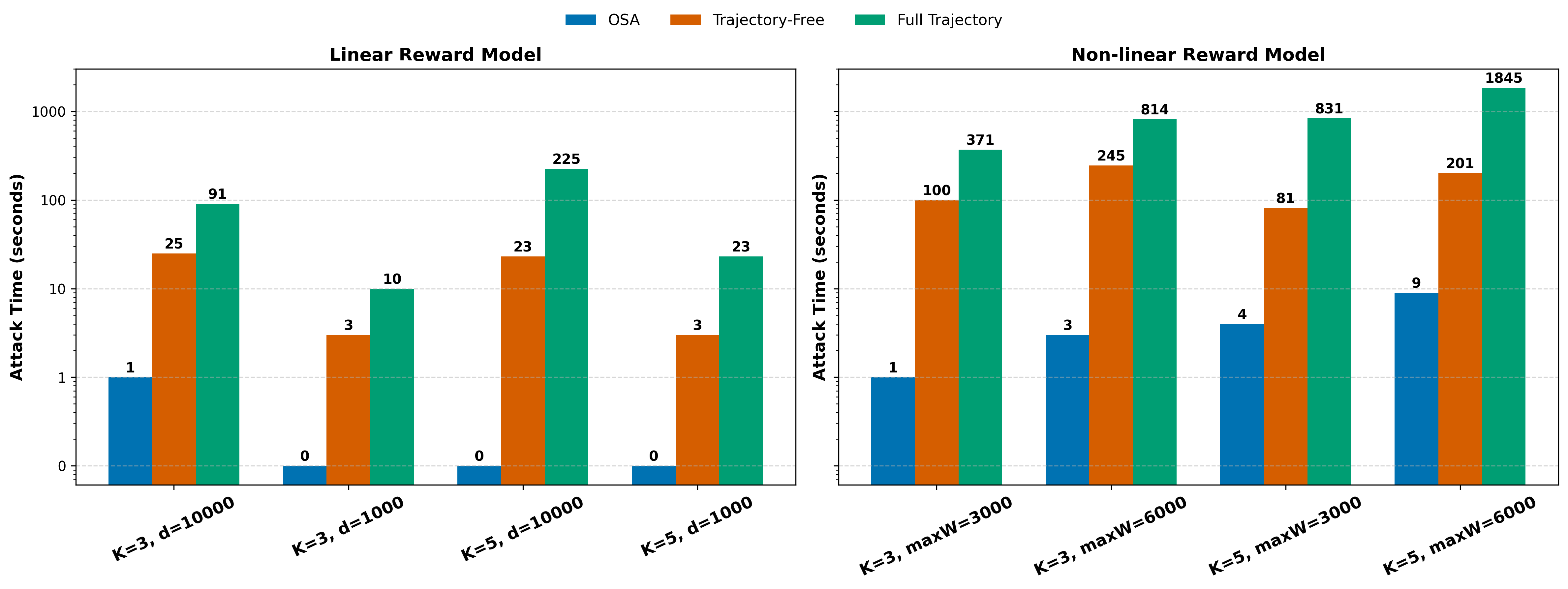}
    \caption{Comparison of attack performance on linear and nonlinear reward models using \emph{OSA}, \emph{Trajectory-Free}, and \emph{Full Trajectory} methods. All attacks achieve a 100\% success rate. The attack times (in seconds) are displayed on a logarithmic scale in the plot, highlighting the superior efficiency of \emph{OSA} and illustrating the effect of varying parameters $K$ and $d$ (or $\max_i W_i$) on attack duration.}
    \label{fig:comparison_synthetic_data}
\end{figure}

\subsection{Synthetic Setting: Attack Performance Across Reward Models}
\label{exp:reward_models}

In this experiment, we evaluate the performance of our attack on reward models, examining its effectiveness in both linear and non-linear settings for synthetic data. To provide a more comprehensive assessment, we vary key parameters such as $K$ and $d$. Furthermore, We evaluate all variants of our method: \emph{OSA} , \emph{Trajectory-Free}, and \emph{Full-Trajectory}. 

This experiment focuses on two aspects: (1) demonstrating that both linear and non-linear reward models can be successfully attacked, and (2) comparing the attack time of the \emph{OSA} and \emph{Trajectory} methods. As shown in Figure~\ref{fig:comparison_synthetic_data}, all variants of our attack achieve a 100\% attack success rate ($ASR$) on the UCB algorithm. From these results, we observe that the \emph{OSA} method substantially reduces the time required to generate perturbations (shown in the logarithmic scale). This demonstrates the clear efficiency advantage of the \emph{OSA} approach. We also present additional experiments in Appendix~\ref{app:logt}, showing that the number of constraints in \emph{OSA} scales as $O(\log T)$.

\subsection{Impact of Dimensionality on Attack Performance}
\label{exp:high_dimension_impact}
In this experiment, we validate our claim that increasing input dimensionality makes the bandit algorithm more vulnerable to attacks. To investigate this, we vary the input dimensionality while keeping all other parameters fixed. We observe that both the $\ell_2$ and $\ell_\infty$ norms of the perturbation decrease rapidly, indicating that in higher-dimensional settings, smaller perturbations suffice to successfully attack the algorithm.  
Additionally, we reformulate the optimization problem from a Feasibility Program (FP) to a Quadratic Program (QP) to evaluate how effectively our method approximates the optimal solution. In the low-dimensional regime, the \emph{OSA} solution shows a small gap relative to the \emph{Optimal} solution. However, as the input dimensionality increases, this gap diminishes, demonstrating that our \emph{OSA} method achieves near-optimal attacks in higher-dimensional settings. We can conclude that our \emph{OSA} method not only identifies a perturbation quickly, but also produces one that is very close to the optimal solution. Moreover, we investigate whether our attack in high-dimension is imperceptible or not based on its geometry in Appendix~\ref{app:geometry}.

% \begin{figure}[!t]
%     \centering
%     \includegraphics[width=\textwidth]{figs/combined_norms_asr.png}
%     \caption{\textbf{Left:} Effect of increasing input dimensionality on attack magnitude. Both the $\ell_2$ and $\ell_\infty$ norms of the perturbation decrease as the input dimensionality increases, indicating that higher-dimensional inputs are more vulnerable to attacks. Experimental settings: $T=100$, $K=3$, ASR$=100\%$. \textbf{Right:} Effect of applying the proposed defense on attack success rates. Shuffling a portion of the logged data before running the bandit algorithm significantly reduces the effectiveness of attacks. Specifically, we shuffle the first $T/2$ of the logged data before execution.}
%     \label{fig:combined_norms_asr}
% \end{figure}

\begin{figure}[!t]
    \centering
    \includegraphics[width=\textwidth]{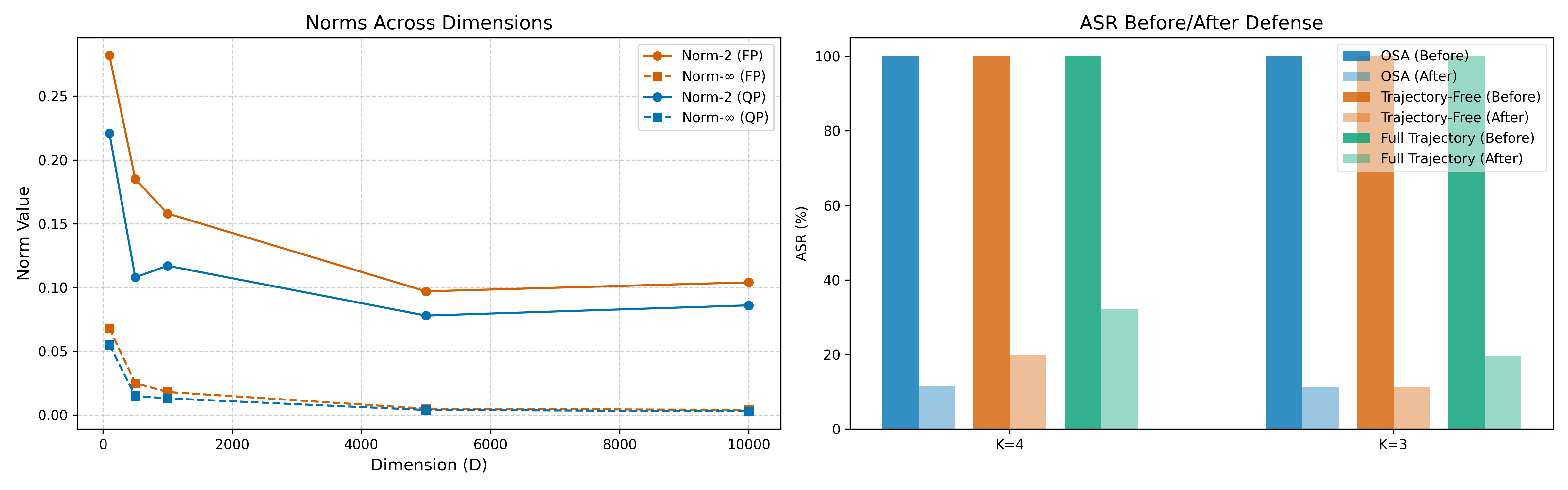}
    \caption{\textbf{Left:} Effect of increasing input dimensionality on attack magnitude. Both the $\ell_2$ and $\ell_\infty$ norms of the perturbation decrease as the input dimensionality increases, indicating that higher-dimensional inputs are more vulnerable to attacks. Experimental settings: $T=100$, $K=3$, ASR$=100\%$. \textbf{Right:} Effect of applying the proposed defense on attack success rates. Shuffling a portion of the logged data before running the bandit algorithm significantly reduces the effectiveness of attacks. Specifically, we shuffle the first $T/2$ of the logged data before execution.}
    \label{fig:combined_norms_asr}
\end{figure}

\subsection{Real-world Setting: Real Data and Reward Models}
\label{exp:real_data}
We demonstrate that our attack can be applied in real-world settings, including real data and reward models, using image generative models as a case study. Specifically, we evaluate our attack on five generative models: Stable Diffusion 3 (SD3) and Stable Diffusion 1.4~\citep{sd3}, Kandinsky 3~\citep{kandinsky}, Openjourney, and Stable Diffusion XL~\citep{sdxl}. We sample 30 random prompts from GenAI-Bench~\citep{genai_bench} and curate logged data for each prompt by generating outputs from the models using seeds ranging from 1 to 100. For reward models, we use two widely adopted models: Image Reward~\citep{imagereward}, which evaluates the compositional alignment between the generated image and its prompt, and the Aesthetic Model~\citep{aesthetic}, which assesses the aesthetic quality of the generated image. Both models are widely used in prior work, and their weights are publicly available on HuggingFace. Moreover, both models are large, so we keep their weights frozen, except for a small subset used in the attack. As shown in Figure~\ref{fig:asr_reward_models}, the distribution of our attack success rate (ASR) is high: if we select a random prompt, generate logged data for it, and perform our attack, there is approximately an 80\% probability that the ASR falls between 90–100\%. We further demonstrate that our method can also attack a randomly initialized Reward model with arbitrary weights; details are provided in Appendix~\ref{app:random_reward_model}. Additionally, to illustrate our method more clearly, a complete and detailed visualization of our attack on the Aesthetic Reward Model is provided in Appendix~\ref{app:visualization}.

\begin{figure}[H]
    \centering
    % Aesthetic Reward Model
    \begin{subfigure}[b]{0.45\textwidth}
        \centering
        \includegraphics[width=\textwidth]{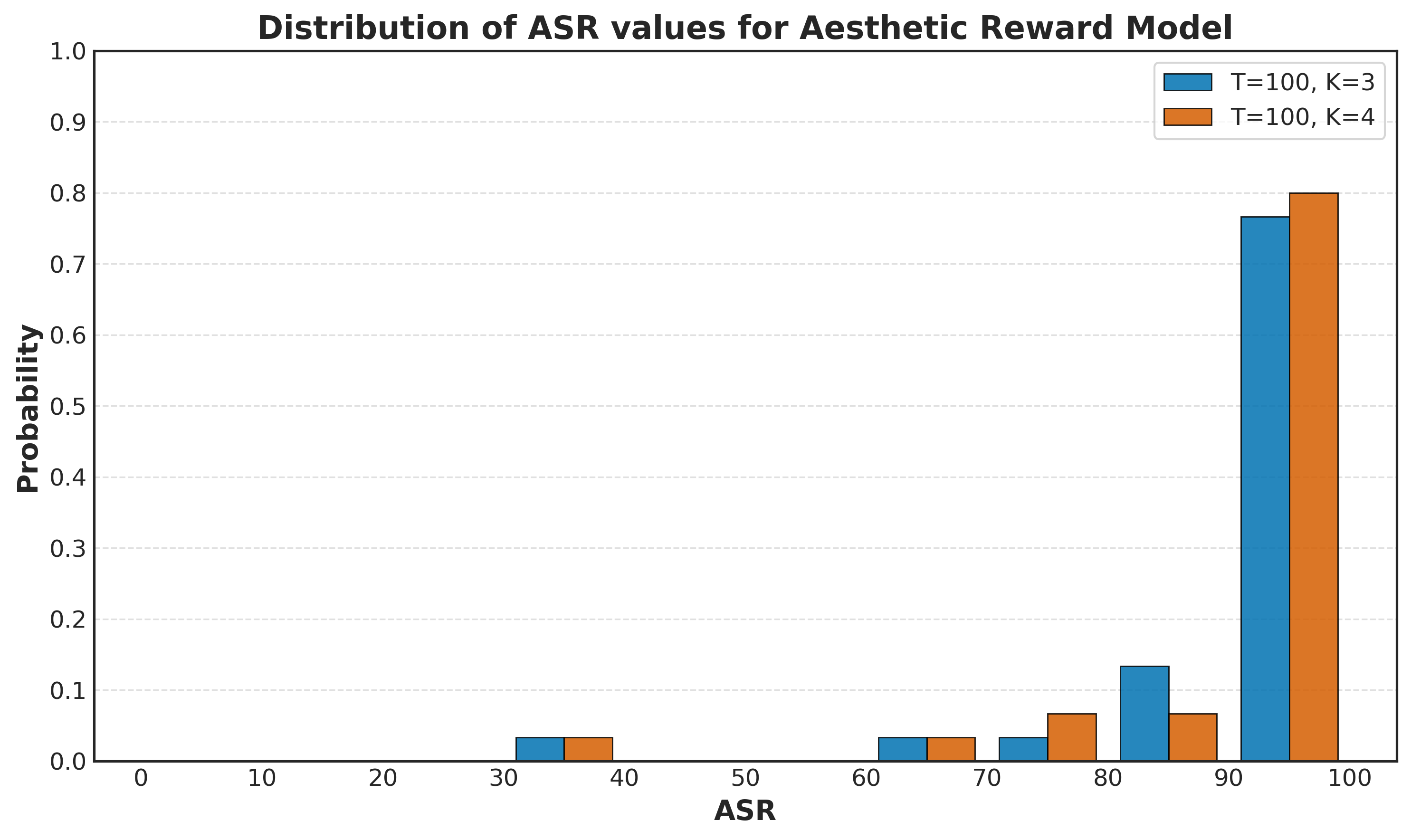}
        \caption{Aesthetic Reward Model}
        \label{fig:asr_aesthetic_reward_model}
    \end{subfigure}
    \hfill
    % Image Reward Model
    \begin{subfigure}[b]{0.45\textwidth}
        \centering
        \includegraphics[width=\textwidth]{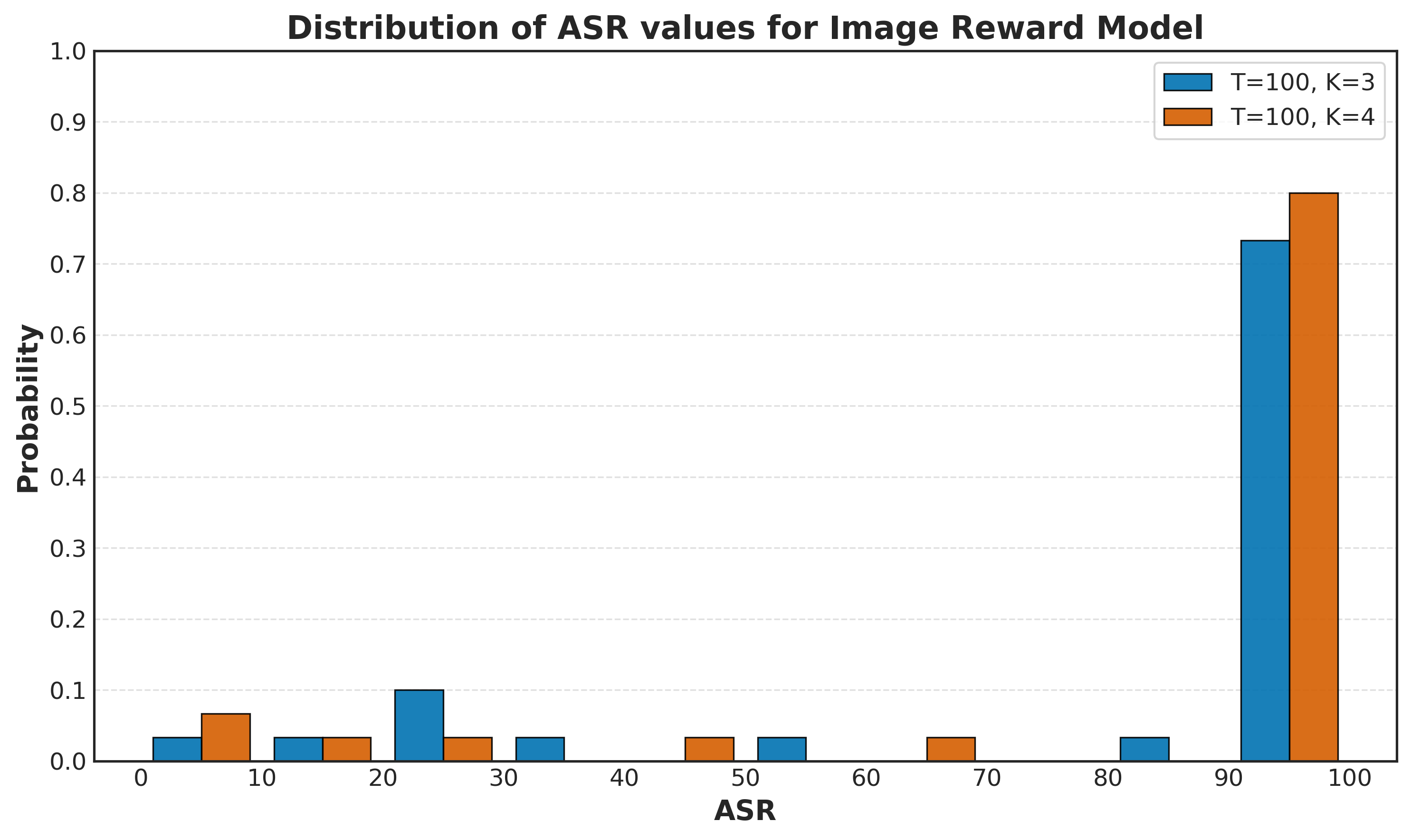}
        \caption{Image Reward Model}
        \label{fig:asr_image_reward_model}
    \end{subfigure}
    
    \caption{Distribution of ASR values for the Aesthetic and Image reward models under the OSA method across two configurations. Each subplot shows the ASR distribution for one reward model.}
    \label{fig:asr_reward_models}
\end{figure}

\subsection{Attacking Bandit Algorithms Beyond UCB}
\label{exp:other_algorithms}
Throughout this paper, we use the UCB~\citep{ucb} algorithm as our main bandit algorithm. To evaluate the robustness of our attack, we also consider scenarios where the adversary employs other bandit algorithms, specifically ETC~\citep{etc} and $\epsilon$-greedy~\citep{epsilon_greedy}. Table~\ref{tab:other_bandit_algo} presents the results of our attack on these algorithms, showing that it achieves a 100\% ASR across all these algorithms. Additional details about these algorithms are provided in Appendix~\ref{app:more_bandit_algorithms}.

\subsection{Our Proposed Defense}
\label{exp:defense}
We propose a defense that remains effective even when the environment is aware of the attack method, has access to the logged data, and is able to shuffle it. The defense operates by randomly shuffling a portion of the logged data before running the bandit algorithm. This simple yet effective strategy can substantially reduce the attack’s success. As shown in Figure~\ref{fig:combined_norms_asr}, shuffling only $T/2$ of the logged data is sufficient to significantly mitigate the attack. Additional results exploring other fractions of shuffled data are provided in Appendix~\ref{app:more_defense}.

% \begin{figure}[!t]
%     \centering
%     \includegraphics[width=\textwidth]{figs/combined_asr_plots.png}
%     \caption{\textbf{Left}: Attack success rate as a function of the hidden layer width in the reward model ($T=100$). 
%     The results show that a sufficiently wide hidden layer is required for the attack to succeed; once the width exceeds 750 neurons, the ASR reaches 100\%. \textbf{Right}: Attack success rate versus the $\ell_2$ norm of random noise perturbations. The results show that even with increasing perturbation magnitude, the ASR remains nearly constant, demonstrating that random noise is largely ineffective in attacking the bandit algorithm.}    
%     \label{fig:asr}
% \end{figure}

\subsection{Effect of Random Noise Perturbations on Attack Performance}
\label{exp:random_noise}
We demonstrate that applying a random noise perturbation, even with a larger $\ell_2$ norm, fails to produce effective attacks. As shown in the right part of Figure~\ref{fig:asr}, when no perturbation is applied, the attack success rate (ASR) is approximately 25\%. Increasing the $\ell_2$ norm of the random perturbation has little impact on the ASR, which remains nearly constant across a wide range of perturbation magnitudes. This indicates that simple random noise is largely ineffective at compromising the bandit algorithm. 
% In contrast, our targeted attack methods are able to achieve significantly higher ASR, highlighting the necessity of carefully designed perturbations rather than relying on random noise.

\subsection{Effect of Non-Linear Reward Model Width on Attack Success}
\label{exp:non_linear_reward_model}
In this experiment, we investigate the impact of the reward model's width on attack performance. For our theoretical assumptions to hold in practice, the network's linear approximation must be valid. To achieve this, the reward model is designed with a single hidden layer of sufficient width. As the number of neurons increases, the linear approximation becomes more accurate~\citep{ntk}. Our results indicate that having enough neurons in this hidden layer is critical for a successful attack. As illustrated in the left part of Fig.~\ref{fig:asr}, once the number of neurons exceeds $750$, the attack success rate reaches $100$\%. Implementation details of the reward model training for synthetic data are provided in Appendix~\ref{app:training_reward_model}.

\begin{figure}[t!]
    \centering
    \includegraphics[width=\textwidth]{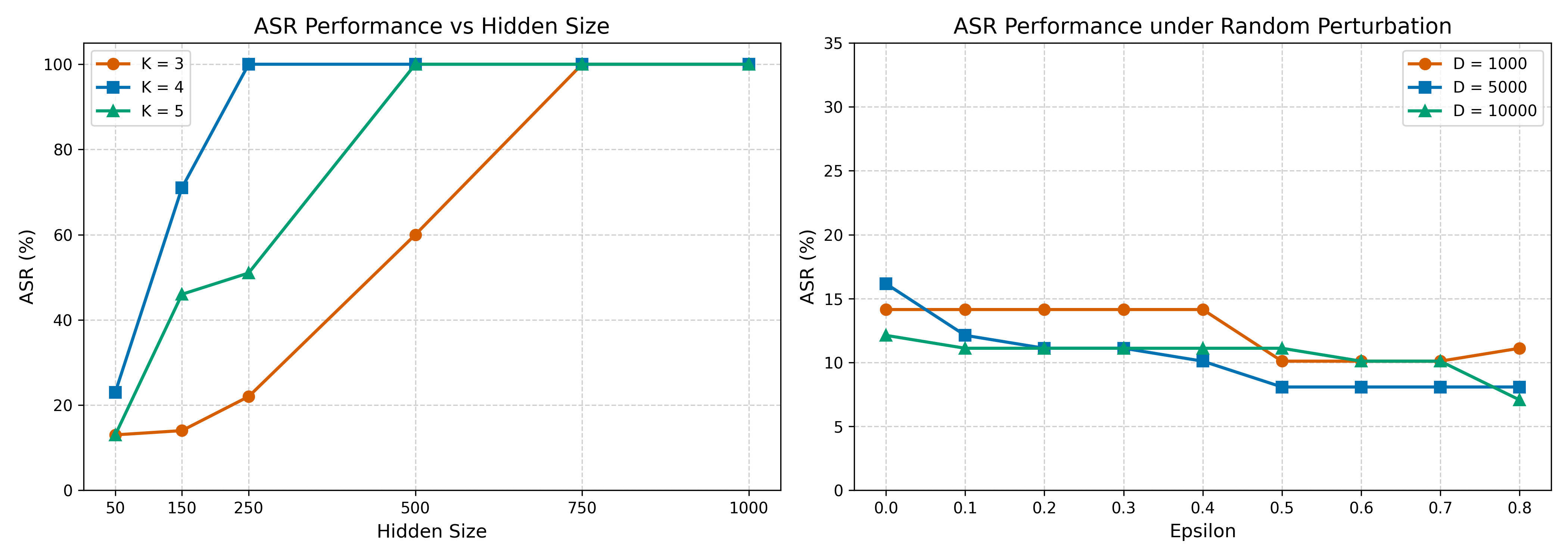}
    \caption{\textbf{Left}: Attack success rate as a function of the hidden layer width in the reward model ($T=100$). 
    The results show that a sufficiently wide hidden layer is required for the attack to succeed; once the width exceeds 750 neurons, the ASR reaches 100\%. \textbf{Right}: Attack success rate versus the $\ell_2$ norm of random noise perturbations. The results show that even with increasing perturbation magnitude, the ASR remains nearly constant, demonstrating that random noise is largely ineffective in attacking the bandit algorithm.}    
    \label{fig:asr}
\end{figure}

\subsection{Comparison of Our Method with Robust Stochastic Bandit Algorithms}
\label{exp:robust_bandit_comparison}
Existing corruption-robust MAB works~\citep{epsilon_contamination, robust_sab1, robust_sab2} assume an adversary who actively modifies the realized rewards during the bandit interaction, i.e., the adversary is active, present throughout training, and directly alters the reward feedback as it is generated. In contrast, our attack perturbs the internal weights of the reward model before bandit training begins. The adversary does not have access to the environment where the bandit is run, which we believe is a much more realistic setting. 
% This pre-training, offline threat model, where a victim downloads a compromised reward model from a repository such as HuggingFace, has not been studied in prior MAB attack literature. For these reasons, comparing attack budgets across these settings is, in our view, not directly meaningful: the underlying threat models differ substantially.
However, we additionally evaluated our attack against the Fast–Slow algorithm~\citep{robust_sab1}, a representative robust bandit method. As shown in Figure~\ref{fig:fast_slow}, this algorithm is also vulnerable. With a corruption parameter of $0.5$ (a typical setting in their own experiments), our attack succeeds 100\% of the time, while keeping the perturbation norm small (around $0.3$). 
% Moreover, as the corruption budget increases, which benefits the adversary, the required $\ell_2$-norm decreases, making the attack even easier to execute. 
% We conducted these experiments for $T = 1000$, $d = 1000$, and $K \in \{3,5\}$.

\begin{figure}[H]
    \centering
    \includegraphics[width=0.9\textwidth]{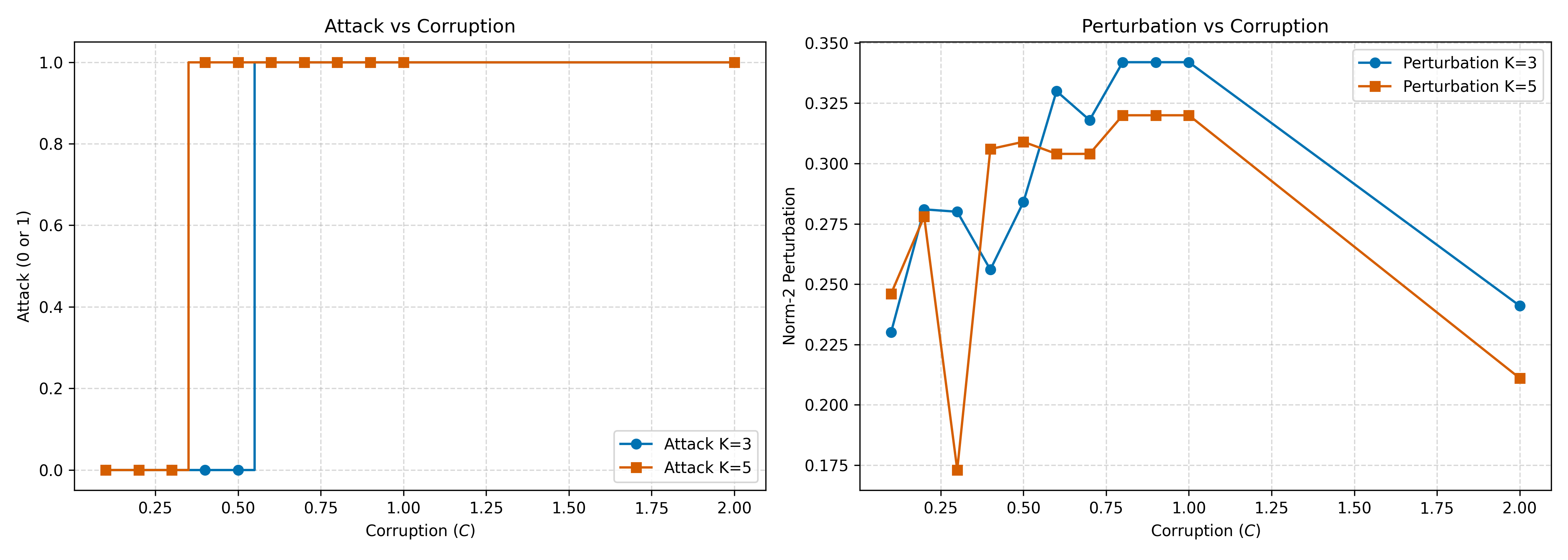}
    \caption{Our pre-training attack successfully hijacks the Fast–Slow robust bandit~\citep{robust_sab1}, achieving 100\% success with minimal $\ell_2$-norm perturbation ($\approx 0.3$) for $K \in \{3,5\}$, $T = 1000$, and $d = 1000$.}
    \label{fig:fast_slow}
\end{figure}

Moreover, we evaluate our attack against the $\varepsilon$-contamination algorithm~\citep{epsilon_contamination}, a robust variant of UCB. Figure~\ref{fig:epsilon_contamination} illustrates that for a reasonable contamination level ($\varepsilon = 0.15$), our attack achieves a $100\%$ ASR while requiring only a small $\ell_2$-norm perturbation (approximately $0.2$). 
% For this experiment, we use $T \in \{100, 1000\}$, $K \in \{3, 5\}$, $d = 1000$, $\alpha = 0.1$, $\sigma = 1$, and the $\alpha$-trimmed strategy within the $\varepsilon$-contamination algorithm.

\begin{figure}[H]
    \centering
    \includegraphics[width=0.9\textwidth]{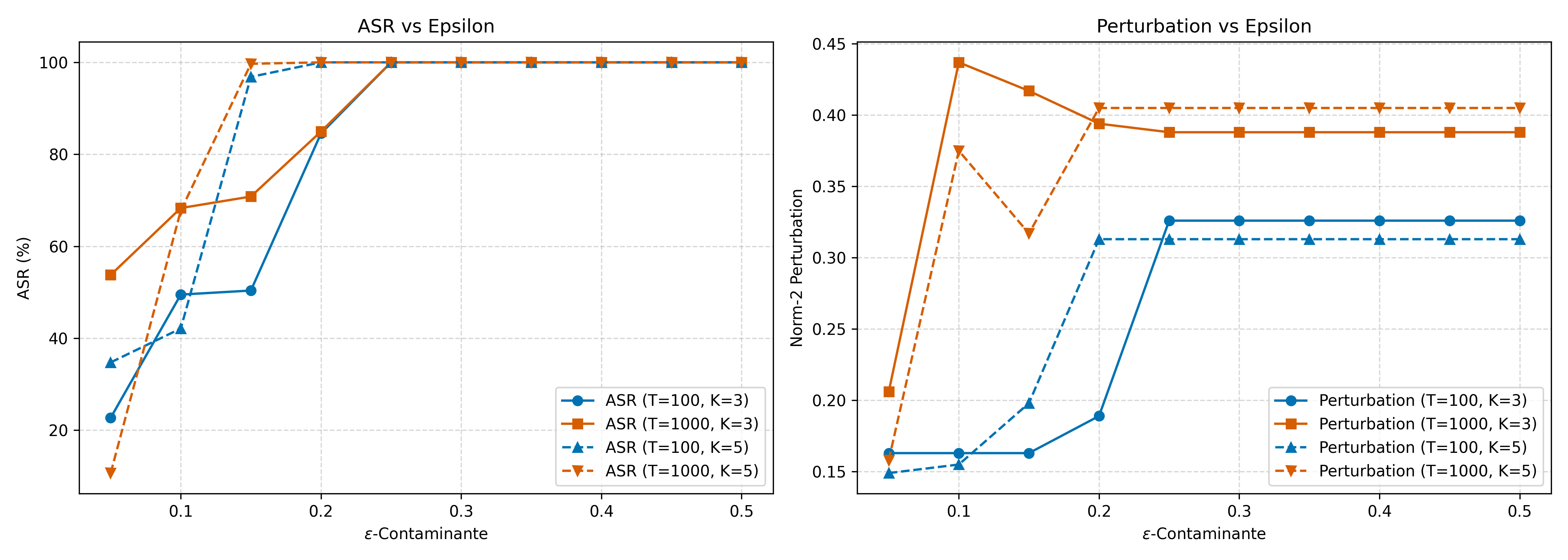}
    \caption{Effectiveness of our attack against the $\varepsilon$-contamination algorithm~\citep{epsilon_contamination}, a robust UCB variant. Experiments are conducted with $T \in \{100, 1000\}$, $K \in \{3, 5\}$, $d = 1000$, $\alpha = 0.1$, $\sigma = 1$, using the $\alpha$-trimmed strategy.}
    \label{fig:epsilon_contamination}
\end{figure}

%===============================================================================

\section{Conclusion}
\label{conclusion}
In this work, we investigate the vulnerabilities of offline bandit algorithms used to evaluate machine learning models, including generative models. % and large language models.
We show that publicly sharing reward model weights on platforms like Hugging Face exposes these models to adversarial attacks and highlight the critical role of logged data in offline bandit evaluation. We introduce a novel threat, demonstrating that even small, imperceptible perturbations to reward model weights can completely hijack bandit behavior. Our analysis starts with linear reward models, revealing their high susceptibility to manipulation in high-dimensional settings, and extends to nonlinear models such as ReLU-based neural networks. We validate our approach on real data by downloading publicly available weights from Hugging Face and successfully attacking these models. To mitigate such attacks, we propose a defense strategy and, for scenarios where the attacker only aims to prevent optimal arm selection, we introduce an efficient on–off heuristic that is both fast and effective. Finally, we provide theoretical support showing how perturbations can systematically steer bandit behavior in proportion to input dimensionality.

%===============================================================================

% \section{Rebuttal}
% \label{rebuttal}
% \input{sections/rebuttal}

%%%% Do not use clear page
% \clearpage

\bibliography{iclr2026_conference}
\bibliographystyle{iclr2026_conference}

\clearpage

\appendix

\section{Related Work}
\label{app:related_work}
While adversarial attacks on supervised learning models have been extensively studied \citep{supervised1, supervised2, supervised3, supervised4, supervised5}, the vulnerabilities of bandit algorithms remain less explored. This is particularly important in settings where reward models \citep{dascore, dsg, tifa, t2i} are combined with bandit algorithms as evaluators \citep{mix_ucb, ucb_select}, highlighting the need to understand prior adversarial attacks on bandit algorithms and to prepare for developing new attack strategies. Recent studies have investigated adversarial attacks on multi-armed bandits (MABs) \citep{related1, related2, related3} and linear contextual bandits \citep{related4, related5}. In reward poisoning attacks, an adversary manipulates the reward signals received by the agent from the environment \citep{related1, related2, epsilon_contamination, robust_sab1, robust_sab2, robust_adversarial}. By subtly altering these rewards, the attacker can bias the learning process, causing the agent to favor specific actions or arms, which may lead to suboptimal or adversary-driven behavior. In linear contextual bandits, adversarial attacks have focused on two main strategies. The first is reward poisoning, where the adversary modifies the rewards associated with chosen actions to steer the agent toward particular arms \citep{related4, related5}. The second is context poisoning, in which the attacker alters the observed context vectors while leaving the rewards unchanged, misleading the agent into making suboptimal or attacker-desired decisions \citep{related5}. Together, these attacks demonstrate the vulnerabilities of linear contextual bandits to both reward and context-based manipulations. Our work differs from prior studies in that, rather than directly attacking the rewards, we target the reward models themselves. This approach is particularly relevant in the emerging usage of bandit algorithms as evaluation metrics. We show that in high-dimensional settings, reward models are vulnerable, making it possible for an attacker to easily manipulate them and thereby hijack the behavior of the bandit algorithm.

%%%%%%%%%%%%%%%%%%%%%%%%%%%%%%%%%%%%%%%%%%%%%%%%%%%%%%%%%%%%%%%%%%%%%%%%%%%%%%%%%%%%%%%%%%%%%%%%%%%%%%%%%%%%%%%%%%%%%%%%%%%%%%%%%%%%%%%%%%%%%%%%%%%%%%%%%%%%%%%%%%%%%%%%%%%%%%%%%%%%%%%%%%%%%%%%%%%%%%%%%%%%%%%%%%%%%%%%%%%%%%%%%%%%%%%%%%%%%%%%%%%%%%%%%%%%%%%%%%%%%%%%%%%%%%%%%%%%%%%%%%%%%%%%%%%%%%%%%%%%%%%%%%%%%%%%%%%%%%%%%%%%%%%%%%%%%%%%%%%%%%%%%%%%%%%%%%%%%%%%%%%%%%%%%%%%%%%%%%%%%%%%%%%%%%%%%%%%%%%%%%%%%%%%%%%%%%%%%%%%%%%%%%%%%%%%%%%%%%%%%%%%%%%%%%%%%%%%%%%%%%%%%%%%%%%%%%%%%%%%%%%%%%%%%%%%%%%%%%%%%%%%%%%%%%%%%%%%%%%%%%%%%%%%%%%%%%%%%%%%%%%%%%%%%%%%%%%%%%%%%%%%%%%%%%%%%%%%%%

\section{Proofs}
\label{app:proofs}

\begin{proof}[Proof of Theorem \ref{thm:optLinearQP}]

We first establish the result for the case of full-trajectory attacks. The other two variants differ only in that they employ a subset of the $(T-K)(K-1)$ constraints; hence, both the optimization formulation and the feasibility properties can be deduced directly from the analysis of the full-trajectory case.  
In this regard, we define the residual outputs $\boldZ$ as $\boldZ^{(i)}_j\triangleq \boldX^{(i)}_j-\boldmu_i$, therefore we have $\mathbb{E}[\boldZ^{(i)}_j]=\boldsymbol{0}$ for all valid $i$ and $j$. Let us define $\boldS^{(i)}_j\in\reals^d$ for $i\in[K]$ and $k\in[n_i]$ as
$$
\boldS^{(i)}_k\triangleq
\frac{1}{k}\sum_{j=1}^{k}\boldX^{(i)}_j
=
\boldmu_i+\frac{1}{k}\sum_{j=1}^{k}\boldZ^{(i)}_j.
$$
In this regard, we have $\mathbb{E}[\boldS^{(i)}_k]=\boldmu_i$, for all $i$ and regardless of $k\ge 1$. Therefore, the second sub-problem above reduces to
\begin{align}
&\argmin_{\bolddelta}~\Vert\bolddelta\Vert^2_2
\\
&\mathrm{subject~to~~} 
\left(\boldw+\bolddelta\right)^{\top}
\left[
\boldS^{(i)}_{N_{i}(t)}
-
\boldS^{(j)}_{N_{j}(t)}
\right]
>
\sqrt{2\log t}\left(N^{-1/2}_j(t)-N^{-1/2}_i(t)\right)
\nonumber\\
&\hspace{20mm}
(\mathrm{for}~i=\widetilde{A}_t,~\forall j\neq i,~K<t\leq T),
\nonumber
\end{align}
and $N_i(t)\triangleq \sum_{t'=1}^{t-1}\mathbbm{1}(\widetilde{A}_{t'}=i)$. The above set of inequality constraints can be further simplified via introducing new symbols, and hence the above convex feasibility problem can be rewritten as
\begin{align}
&\argmin_{\bolddelta}~\Vert\bolddelta\Vert^2_2
\\
&\mathrm{subject~to~~} 
\bolddelta^{\top}
\boldT_{j,t}
>
R_{j,t},\quad\forall j\in[K-1],~K<t\leq T,
\nonumber
\end{align}
where the vectors $\boldT_{j,t}$ and scalars $R_{j,t}$ depend on samples in datasets $\mathcal{D}_i$ and the target sequence $\widetilde{A}_t$. More precisely, we have
\begin{align}
\label{eq:defTR}
\boldT_{j,t}
&\triangleq
\left(\boldmu_i-\boldmu_{j'}\right)
+
\left[
\frac{1}{N_i(t)}
\sum_{l=1}^{N_i(t)}
\boldZ^{(i)}_l
-
\frac{1}{N_{j'}(t)}
\sum_{l=1}^{N_{j'}(t)}
\boldZ^{(j')}_l
\right],
\\
R_{j,t}
&\triangleq
-\boldw^{\top}\boldT_{j,t}+
\sqrt{2\log t}\left(N^{-1/2}_{j'}(t)-N^{-1/2}_i(t)\right),
\nonumber
\end{align}
where we have $i\triangleq \widetilde{A}_t$, and $j'\in[K-1]$ is the index of the set $[K]$ after removing $\widetilde{A}_t$. 

In the case where the reward model is a neural network rather than linear, the NTK approximation in \eqref{eq:NTKsimplifiedEq}, together with the assumption $\max_{i} W_i \gg 1$, yields an analogous QP form, with modified definitions of $\boldT_{j,t}$ and $R_{j,t}$:
\begin{align}
\boldT_{j,t}
&\triangleq
\frac{1}{N_i(t)}
\sum_{l=1}^{N_i(t)}
\nabla_{\btheta}\mathrm{NN}_{\btheta}(\boldX^{(i)}_l)
-
\frac{1}{N_{j'}(t)}
\sum_{l=1}^{N_{j'}(t)}
\nabla_{\btheta}\mathrm{NN}_{\btheta}(\boldX^{(j')}_l),
\\
R_{j,t}
&\triangleq
-\left[
\frac{1}{N_i(t)}
\sum_{l=1}^{N_i(t)}
\mathrm{NN}_{\btheta}(\boldX^{(i)}_l)
-
\frac{1}{N_{j'}(t)}
\sum_{l=1}^{N_{j'}(t)}
\mathrm{NN}_{\btheta}(\boldX^{(j')}_l)
\right]
+
\sqrt{2\log t}\left(N^{-1/2}_{j'}(t)-N^{-1/2}_i(t)\right).
\nonumber
\end{align}
This completes the proof.
\end{proof}

%%%%%%%%%%%%%%%%%%%%%%%%%%%%%%%%%%%%%%%%%%%%%%%%%%%%%%%%%%%%%%%%%%%%%%%%%%%%%%%%%%%%%%%%%%%%%%%%%%%%%%%%%%%%%%%%%%%%%%%%%%%%%%%%%%%%%%%%%%%%%%%%%%%%%%%%%%%%%%%%%%%%%%%%%%%%%%%%%%%%%%%%%%%%%%%%%%%%%%%%%%%%%%%%%%%%%%%%%%%%%%%%%%%%%%%%%%%%%%%%%%%%%%%%%%%%%%%%%%%%%%%%%%%%%%%%%%%%%%%%%%%%%%%%%%%%%%%%%%%%%%%%%%%%%%%%%%%%%%%%%%%%%%%%%%%%%%%%%%%%%%%%%%%%%%%%%%%%%%%%%%%%%%%%%%%%%%%%%%%%%%%%%%%%%%%%%%%%%%%%%%%%%%%%%%%%%%%%%%%%%%%%%%%%%%%%%%%%%%%%%%%%%%%%%%%%%%%%%%%%%%%%%%%%%%%%%%%%%%%%%%%%%%%%%%%%%%%%%%%%%%%%%%%%%%%%%%%%%%%%%%%%%%%%%%%%%%%%%%%%%%%%%%%%%%%%%%%%%%%%%%%%%%%%%%%%%%%%%%%%%%%%%%%%%%%%%%%%%%%%%%%%%%%%%%%%%%%%%%%%%%%%%%%%%%%%%%%%%%%%%%%%%%%%%%%%%%%%%%%%%%%%%%%%%%%%%%%%%%%%%%%%%%%%%%%%%%%%%%%%%%%%%%%%%%%%%%%%%%%%%%%%%%%%%%%%%%%%%%%%%%%%

\begin{proof}[Proof of Theorem \ref{thm:feasibility}]
Without loss of generality, we assume $i^* = 1$.
Recalling the notation from the proof of Theorem \ref{thm:optLinearQP}, for all valid $j$ and $t$, we have
\begin{align}
\label{eq:Thm3_3:wTExpAndVar}
\mathbb{E}\left[
\boldw^{\top}\boldT_{j,t}
\right]
&=\left\{
\begin{array}{rl}
    1 & ~~\widetilde{A}_t=1
    \\
    0 & ~~\widetilde{A}_t\neq 1, j\neq 1
    \\
    -1 & ~~\widetilde{A}_t\neq 1, j=1
\end{array}
\right.,
\\
\mathsf{Var}\left[
\boldw^{\top}\boldT_{j,t}
\right]
&=
\frac{1}{N_i(t)}
\boldw^{\top}
\mathsf{Cov}_{P_i}
\boldw
+
\frac{1}{N_{j'}(t)}
\boldw^{\top}
\mathsf{Cov}_{P_{j'}}
\boldw.
\nonumber
\end{align}
Note that both $\boldT_{j,t}$ and $R_{j,t}$ are deterministic.
Also, we have
\begin{align}
\label{eq:Thm3_3:RNaiveBound}
\vert R_{j,t}\vert
\leq
\left\vert
\boldw^{\top}\boldT_{j,t}
\right\vert
+
\sqrt{2\log t}\left(1-t^{-1/2}\right).
\end{align}
The set of constraints can be rewritten in the following matrix form:
\begin{align}
\mathbb{T}\bolddelta
\succeq \boldsymbol{R},
\end{align}
where $C\times d$ matrix $\mathbb{T}$ and $C$-dimensional vector $\boldsymbol{R}$ are formed via row-wise concatenation of $\boldT_{j,t}$s and $R_{j,t}$s, respectively. Here, $C$ denotes the number of constraints which is at most $(K-1)(T-K)$. Note that we have replaced $\succ$ with $\succeq$ by assuming infinitesimally small margins in the constraints. This modification simplifies the subsequent part of the proof. Also, it should be noted that both $\mathbb{T}$ and $\boldsymbol{R}$ are random.

We need to show that the stochastic inequality system 
$\mathbb{T}\bolddelta \succeq \boldsymbol{R}$
is feasible with probability $1$. We proceed by contradiction. Without loss of generality, assume that the first inequality $\boldT_1^{\top}\bolddelta \ge R_1$ is infeasible (with positive probability), given that the remaining $C-1$ inequalities are satisfied. Formally, assume  
$$
\mathbb{P}\!\left(
\Big\{\bolddelta \,\big\vert\, \boldT_1^{\top}\bolddelta \ge R_1\Big\}
\cap
\Big\{\bolddelta \,\big\vert\, \boldT_i^{\top}\bolddelta \ge R_i,~i=2,\ldots,C\Big\}
=
\emptyset
\right) > 0.
$$
Then we must have  
$$
\mathbb{P}\!\left(
\Big\{\bolddelta \,\big\vert\, \mathbb{T}\bolddelta = \boldsymbol{R}\Big\} = \emptyset
\right) > 0,
$$
that is, the stochastic equation system $\mathbb{T}\bolddelta = \boldsymbol{R}$ is unsolvable with positive probability.  Since the number of variables $d$ exceeds the maximum number of equations $(T-K)(K-1)$, this can only occur if the rows of $\mathbb{T}$ fail to be linearly independent with positive probability.  

However, this is impossible by construction of $\mathbb{T}$ in \eqref{eq:defTR}. Indeed, since all residuals $\{\boldZ^{(i)}_j\}_{j \in [n_i]}$ are independently drawn and each distribution $P_i$ is non-degenerate (i.e., $\lambda_{\min}(\mathsf{Cov}(P_i)) > 0$ for every $i \in [K]$), the rows of $\mathbb{T}$ are distributed in general position and are therefore linearly independent with probability $1$. This completes the proof.
\end{proof}

\begin{proof}[Proof of Theorem \ref{thm:attackSizeGuarantee}]
We begin with the following lemma:
\begin{lemma}
Assume $(T-K)(K-1) < d$ and consider the constrained optimization problem
\begin{align}
\bolddelta^* \triangleq
\argmin_{\bolddelta}~\Vert\bolddelta\Vert_2^2
\quad
\text{subject to}~~~
\mathbb{T}\bolddelta \succeq \boldsymbol{R}.
\label{eq:lemma:pseudoInvFormula}
\end{align}
Define $\widetilde{\bolddelta} \triangleq \mathbb{T}^{\dagger}\boldsymbol{R}$, where $\dagger$ denotes the Moore--Penrose pseudo-inverse, i.e.,
$
\widetilde{\bolddelta} = (\mathbb{T}\mathbb{T}^{\top})^{-1}\mathbb{T}^{\top}\boldsymbol{R}.
$
Then we have
$
\Vert\bolddelta^*\Vert_2 \leq \Vert \widetilde{\bolddelta}\Vert_2.
$
\end{lemma}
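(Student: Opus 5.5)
The plan is to show that $\widetilde{\bolddelta}$ is itself a feasible point of the program in \eqref{eq:lemma:pseudoInvFormula}. Once that is established, the lemma follows at once: $\bolddelta^*$ minimizes $\Vert\bolddelta\Vert_2^2$ over the feasible set, and $\widetilde{\bolddelta}$ belongs to that set, so $\Vert\bolddelta^*\Vert_2^2\le\Vert\widetilde{\bolddelta}\Vert_2^2$.

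\textbf{Step 1: the matrix $\mathbb{T}\mathbb{T}^\top$ is invertible.} The matrix $\mathbb{T}$ has $C\le(T-K)(K-1)<d$ rows. By the general-position argument used in the proof of Theorem~\ref{thm:feasibility}, its rows are linearly independent with probability $1$. We need non-degenerate $P_i$ here, which holds because $\mathsf{Cov}(P_i)=\boldsymbol{I}$ under the assumptions of Theorem~\ref{thm:attackSizeGuarantee}. So $\mathbb{T}$ has full row rank $C$, and the $C\times C$ Gram matrix $\mathbb{T}\mathbb{T}^\top$ is positive definite, hence invertible. In particular the expression $\widetilde{\bolddelta}=\mathbb{T}^\top(\mathbb{T}\mathbb{T}^\top)^{-1}\boldsymbol{R}$ is well defined. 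This is the right-inverse form of the pseudo-inverse. The formula in the statement has the factors in a slightly different order, and we read it as this right-inverse form, which is the only dimensionally consistent one.

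\textbf{Step 2: $\widetilde{\bolddelta}$ is feasible.} Direct computation gives
\[
\mathbb{T}\widetilde{\bolddelta}=\mathbb{T}\mathbb{T}^\top(\mathbb{T}\mathbb{T}^\top)^{-1}\boldsymbol{R}=\boldsymbol{R}.
\]
So every constraint holds with equality, and in particular $\mathbb{T}\widetilde{\bolddelta}\succeq\boldsymbol{R}$. The feasible set is therefore nonempty. It is also closed and convex, and the objective is strongly convex, so the minimizer $\bolddelta^*$ exists and is unique. The comparison in the first paragraph then gives the claim.

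\textbf{Expected obstacle.} The only delicate point is the strict-versus-nonstrict inequality. The original program uses $\succ$, for which $\widetilde{\bolddelta}$, sitting exactly on the constraint boundaries, is not strictly feasible. We follow the convention already adopted in the proof of Theorem~\ref{thm:feasibility}: the $\succ$ is replaced by $\succeq$, thought of as an infinitesimal margin. If a strict version is needed, replace $\boldsymbol{R}$ by $\boldsymbol{R}+\epsilon\boldsymbol{1}$ and let $\epsilon\downarrow 0$. This changes $\Vert\widetilde{\bolddelta}\Vert_2$ by only $O(\epsilon)$, since $(\mathbb{T}\mathbb{T}^\top)^{-1}$ is a fixed bounded operator, so the bound survives in the limit. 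The probability-one qualifier comes entirely from Step 1.
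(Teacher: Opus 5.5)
Your Step~2 is the paper's argument in a slightly cleaner form. The paper uses $\{\bolddelta:\mathbb{T}\bolddelta=\boldsymbol{R}\}\subseteq\{\bolddelta:\mathbb{T}\bolddelta\succeq\boldsymbol{R}\}$ to bound the inequality-constrained minimum by the equality-constrained one, and then identifies the latter with $\mathbb{T}^\dagger\boldsymbol{R}$ through the minimum-norm property of the pseudo-inverse. You only need that $\widetilde{\bolddelta}$ is \emph{some} solution of $\mathbb{T}\bolddelta=\boldsymbol{R}$. Your reordering to $\mathbb{T}^\top(\mathbb{T}\mathbb{T}^\top)^{-1}\boldsymbol{R}$ is correct. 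Granting that $\mathbb{T}\mathbb{T}^\top$ is invertible, which the lemma's own formula tacitly presumes, your proof is complete.

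The gap is Step~1, where you try to derive that invertibility; the paper has the same hole, since it appeals to Theorem~\ref{thm:feasibility} for solvability of $\mathbb{T}\bolddelta=\boldsymbol{R}$. The rows of the full-trajectory matrix are not in general position, however large $d$ is.

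\begin{itemize}
\item \emph{Rank bound.} The row for target $i$ against competitor $j$ at time $t$ is $\boldS^{(i)}_{N_i(t)}-\boldS^{(j)}_{N_j(t)}$. Only $\sum_a N_a(T)=T-1$ distinct running means ever occur, so $\operatorname{rank}\mathbb{T}\le T-2$. Since $(T-K)(K-1)-(T-2)=(K-2)(T-K-1)$, the rows are dependent whenever $K\ge 3$ and $T\ge K+2$.
\item \emph{Explicit dependence.} If $\widetilde A_t=\widetilde A_{t+1}=i$ and $j,k\neq i$, then
\[
\bigl(\boldS^{(i)}_{N_i(t)}-\boldS^{(j)}_{N_j(t)}\bigr)-\bigl(\boldS^{(i)}_{N_i(t)}-\boldS^{(k)}_{N_k(t)}\bigr)
=\bigl(\boldS^{(i)}_{N_i(t)+1}-\boldS^{(j)}_{N_j(t)}\bigr)-\bigl(\boldS^{(i)}_{N_i(t)+1}-\boldS^{(k)}_{N_k(t)}\bigr).
\]
The same signed combination of the corresponding entries of $\boldsymbol{R}$ equals $\bigl(\sqrt{2\log t}-\sqrt{2\log(t+1)}\bigr)\bigl(N_j(t)^{-1/2}-N_k(t)^{-1/2}\bigr)$, which is nonzero whenever $N_j(t)\neq N_k(t)$.
\item \emph{Consequence.} $\mathbb{T}\mathbb{T}^\top$ is singular, and for such trajectories $\mathbb{T}\bolddelta=\boldsymbol{R}$ has no solution at all. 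The genuine Moore--Penrose vector $\mathbb{T}^\dagger\boldsymbol{R}$ is then only a least-squares point and need not be feasible. Without consistency the abstract lemma is false: with two identical rows $\boldsymbol{e}_1^\top$ in $\mathbb{R}^3$ and $\boldsymbol{R}=(1,2)^\top$, you get $\mathbb{T}^\dagger\boldsymbol{R}=\tfrac32\boldsymbol{e}_1$, but $\bolddelta^*=2\boldsymbol{e}_1$.
\item \emph{Probabilistic side.} $\lambda_{\min}(\mathsf{Cov}(P_i))>0$ does not give almost-sure general position. Rademacher coordinates have identity covariance but repeat samples with positive probability.
\end{itemize}

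What the argument actually needs is $\boldsymbol{R}\in\operatorname{range}(\mathbb{T})$. Under that condition $\mathbb{T}\mathbb{T}^\dagger\boldsymbol{R}=\boldsymbol{R}$, and your Step~2 goes through verbatim with the true pseudo-inverse. It must be stated as a hypothesis or proved separately, because it does not follow from $(T-K)(K-1)<d$ once $K\ge 3$.
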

\begin{proof}
The argument is straightforward. Removing any constraint from \eqref{eq:lemma:pseudoInvFormula} (equivalently, removing a row of $\mathbb{T}$) cannot increase $\Vert\bolddelta^*\Vert_2$. Hence, the maximum possible $\ell_2$-norm of $\bolddelta^*$ occurs when all constraints
are active. Formally,
\begin{align}
\Vert\bolddelta^*\Vert_2
&\leq
\left\Vert
\argmin_{\bolddelta}~\Vert\bolddelta\Vert_2^2
\quad
\text{subject to}~~~
\mathbb{T}\bolddelta = \boldsymbol{R}
\right\Vert_2
\nonumber\\
&= \Vert \widetilde{\bolddelta}\Vert_2,
\end{align}
where the last equality follows from the definition of the Moore--Penrose pseudo-inverse, which yields the unique minimum-$\ell_2$-norm solution to an underdetermined linear system. Also, according to Theorem \ref{thm:feasibility}, we know the linear equation system $\mathbb{T}\bolddelta=\boldsymbol{R}$ is solvable (hence $\widetilde{\bolddelta}$ exists) with probability $1$. This completes the proof.
\end{proof}
Next, we bound $\Vert \widetilde{\boldsymbol{\delta}} \Vert_2$. In order to do this, first note that
\begin{align}
\label{eq:thmAttackProof:deltaTildeBound}
\Vert \widetilde{\boldsymbol{\delta}} \Vert_2
\;\leq\;
\frac{\Vert \boldsymbol{R} \Vert_2}{\sigma_{\min}(\mathbb{T})},
\end{align}
where $\sigma_{\min}(\cdot)$ denotes the minimum singular value.  
Therefore, we need to: i) find a high-probability bound for $\sigma_{\min}(\mathbb{T})$, and ii) establish a similar bound for $\Vert \boldsymbol{R} \Vert_2$. This requires an appropriate matrix representation for the construction of $\mathbb{T}$, whose rows are defined in \eqref{eq:defTR}:  
\begin{align}
\text{any row of } \mathbb{T}
&= \boldsymbol{\mu}_i - \boldsymbol{\mu}_j
+ \left[
\frac{1}{N_i(t)} \sum_{l=1}^{N_i(t)} \boldsymbol{Z}^{(i)}_l
- \frac{1}{N_j(t)} \sum_{l=1}^{N_j(t)} \boldsymbol{Z}^{(j)}_l
\right],
\end{align}
for any two properly defined indices $i,j$.  We then leverage existing tools from random matrix theory to establish our main results.

\paragraph{Matrix-form generation of $\mathbb{T}$.}
Let us define the residual matrix $\Gamma$ as follows:
\begin{align}
\Gamma \triangleq 
\begin{bmatrix}
\boldsymbol{Z}^{(1)}_1 \;\big|\; \cdots \;\big|\; \boldsymbol{Z}^{(1)}_{n_1}
\;\big|\; \cdots \;\big|\;
\boldsymbol{Z}^{(K)}_1 \;\big|\; \cdots \;\big|\; \boldsymbol{Z}^{(K)}_{n_K}
\end{bmatrix}^{\!\top},
\end{align}
which is an $(n_1+\cdots+n_K)\times d$ dimensional matrix that collects all the $\boldsymbol{Z}$ vectors as its rows.  
According to the assumptions in the statement of the theorem, $\Gamma$ is a random matrix with independent rows. Moreover, since each data-generating distribution is assumed to be a product measure, the entries in each row are also independent and have variance $1$. We are particularly interested in a sub-matrix of $\Gamma$, denoted by $\Gamma^*\in\mathbb{R}^{(T-K)\times d}$, which consists only of those vectors $\boldsymbol{Z}^{(i)}_j$ that correspond to arms pulled during the target trajectory for time-steps $K < t \leq T$. Since the target trajectory is fixed and deterministic, $\Gamma^*$ is still a random matrix with independently distributed entries of unit variance.

Recall that $N_i(T)$ denotes the number of times arm $i$ has been pulled up to the horizon $T$ for a given fixed target trajectory. For any non-degenerate trajectory $\widetilde{A}_t$, we have
$
N_i(T)\leq\mathcal{O}(T/K),~\forall i \in [K].
$
Let $m$ denote the maximum number of times any arm has appeared during the trajectory, so that $m \leq \mathcal{O}(T/K)$. Now, consider the following family of matrices:
\begin{align}
\Lambda_n \triangleq
\begin{bmatrix}
    1   & 1/2 & \cdots & 1/n \\
    0   & 1/2 & \cdots & 1/n \\
    \vdots & \ddots & \ddots & \vdots \\
    0   & 0   & \cdots & 1/n
\end{bmatrix}, 
\quad \forall n \in \mathbb{N}.
\end{align}
\begin{lemma}
we have $\lambda_{\min}(\Lambda_n)=1/n$.
\end{lemma}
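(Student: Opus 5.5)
The plan is to exploit the triangular structure of $\Lambda_n$ directly. By construction, the $(k,l)$ entry of $\Lambda_n$ equals $1/l$ when $k\le l$ and $0$ when $k>l$. So $\Lambda_n$ is upper triangular, and its diagonal entries are $1,1/2,\ldots,1/n$.

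The first step is to compute the characteristic polynomial. I would expand $\det(\Lambda_n-\lambda \boldsymbol{I})$ along the first column, or equivalently use the fact that the determinant of a triangular matrix is the product of its diagonal entries. Since $\Lambda_n-\lambda\boldsymbol{I}$ is still upper triangular, this gives
\begin{align*}
\det(\Lambda_n-\lambda \boldsymbol{I})=\prod_{k=1}^{n}\left(\frac{1}{k}-\lambda\right).
\end{align*}
Hence the spectrum of $\Lambda_n$ is exactly $\{1,1/2,\ldots,1/n\}$. All eigenvalues are real, positive and distinct, so the smallest one is $\lambda_{\min}(\Lambda_n)=1/n$.

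A useful reformulation for the later steps is the factorization $\Lambda_n=U_n D_n$. Here $U_n$ is the $n\times n$ upper-triangular all-ones matrix, and $D_n=\mathrm{diag}(1,1/2,\ldots,1/n)$. This makes clear how $\Lambda_n$ maps partial sums of the residuals $\boldZ^{(i)}_l$ to the running averages that make up the rows of $\mathbb{T}$. It also shows that $\Lambda_n$ is invertible, since $\det\Lambda_n=1/n!$.

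The eigenvalue computation is immediate; the main obstacle is how the lemma will be used afterwards. For lower-bounding $\sigma_{\min}(\mathbb{T})$, what is really needed is the smallest singular value of $\Lambda_m$, not its smallest eigenvalue, and for a non-normal triangular matrix these differ. If a singular-value bound is required, I would bound $\sigma_{\min}(\Lambda_n)=1/\Vert\Lambda_n^{-1}\Vert_2$ through the explicit inverse. Since $\Lambda_n^{-1}=D_n^{-1}U_n^{-1}$ and $U_n^{-1}$ is bidiagonal with $1$ on the diagonal and $-1$ on the superdiagonal, we have $\Vert\Lambda_n^{-1}\Vert_2\le 2n$. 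This gives $\sigma_{\min}(\Lambda_n)\ge 1/(2n)$, which is the same $1/n$ order up to a constant and suffices for the $\mathcal{O}(T/K)$ scaling in the proof of Theorem~\ref{thm:attackSizeGuarantee}.
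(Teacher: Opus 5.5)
Your argument is correct and is the same as the paper's: $\Lambda_n-\lambda\boldsymbol{I}$ is upper triangular, so $\det(\Lambda_n-\lambda\boldsymbol{I})=\prod_{k=1}^{n}(1/k-\lambda)$ and the spectrum is exactly $\{1,1/2,\ldots,1/n\}$. Your side remark is also sound and worth keeping: the paper later treats this lemma as a bound on $\sigma_{\min}(\Lambda_m)$, which is a different quantity for this non-normal matrix, and your factorization $\Lambda_n=U_nD_n$ with bidiagonal $U_n^{-1}$ gives $1/(2n)\le\sigma_{\min}(\Lambda_n)\le|\lambda_{\min}(\Lambda_n)|=1/n$, which is what that step in the proof of Theorem~\ref{thm:attackSizeGuarantee} actually needs.
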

\begin{proof}
Proof is straightforward. For some fixed $n\in\mathbb{N}$, let us denote the eigenvalues of $\Lambda_n$ by $\lambda_1,\ldots,\lambda_n$. Then, we have $\mathsf{det}(\Lambda_n-\lambda_i\boldsymbol{I})=0$ for all $i\in[n]$. On the other hand, since $\Lambda_n-\lambda_i\boldsymbol{I}$ is upper-triangular, we have
$$
\mathsf{det}(\Lambda_n-\lambda_j\boldsymbol{I})=
\prod_{i=1}^{n}\left(\frac{1}{i}-\lambda_j\right),
\quad\forall j\in[n].
$$
Therefore, we must have $\lambda_j=1/j$ for all $j\in[n]$ and consequently $\lambda_{\min}(\Lambda_n)=1/n$, which completes the proof.
\end{proof}
Consider the matrix $\Lambda_m$, and let us denote its columns by 
$\boldsymbol{v}_1,\ldots,\boldsymbol{v}_m \in \mathbb{R}^m$.  
Based on the definition of $\mathbb{T}$ in \eqref{eq:defTR}, each row of $\mathbb{T}$ consists of a mean-difference term 
$\boldsymbol{\mu}_i - \boldsymbol{\mu}_j$ (for properly defined indices $i,j$), 
together with a term that depends solely on residuals.  
This residual part can be rewritten as
\begin{align}
\frac{1}{N_i(t)} \sum_{l=1}^{N_i(t)} \boldsymbol{Z}^{(i)}_l
-
\frac{1}{N_j(t)} \sum_{l=1}^{N_j(t)} \boldsymbol{Z}^{(j)}_l
&=
\Big(
\big[
\boldsymbol{0} \;\big|\; \boldsymbol{v}_{N_i(t)} \;\big|\; \boldsymbol{0}
\big]
-
\big[
\boldsymbol{0} \;\big|\; \boldsymbol{v}_{N_j(t)} \;\big|\; \boldsymbol{0}
\big]
\Big)^{\top}\Gamma^*,
\end{align}
where 
$\big[
\boldsymbol{0} \;\big|\; \boldsymbol{v}_{i} \;\big|\; \boldsymbol{0}
\big]$ 
denotes a vector of dimension $T-K$ obtained by padding $\boldsymbol{v}_i$ with zeros both before and after. In this regard, the random matrix $\mathbb{T}$ can be expressed as
\begin{align}
\mathbb{T} = \boldsymbol{M} + \boldsymbol{V}^{\top}\Gamma^*,
\end{align}
where $\boldsymbol{M}$ is a low-rank matrix consisting of the mean-difference terms 
$\boldsymbol{\mu}_i - \boldsymbol{\mu}_j$ as its rows.  
This includes (potentially) all possible pairwise differences between the means of the $K$ distributions $P_1,\ldots,P_K$, with repetitions as required.  
Note that $\boldsymbol{M}$ can have at most $K$ nonzero singular values. The matrix $\boldsymbol{V} \in \mathbb{R}^{(T-K)\times (T-K)(K-1)}$ 
collects all combinations of vectors of the form
\[
\big[
\boldsymbol{0} \;\big|\; \boldsymbol{v}_{N_i(t)} \;\big|\; \boldsymbol{0}
\big]
-
\big[
\boldsymbol{0} \;\big|\; \boldsymbol{v}_{N_j(t)} \;\big|\; \boldsymbol{0}
\big],
\]
as its columns.  
The matrix $\boldsymbol{V}$ is fully determined by the target trajectory and can be regarded as a row/column-wise augmentation of $\Lambda_m$.  
Therefore,
\begin{align}
\sigma_{\min}(\boldsymbol{V}) 
\;\ge\; \mathcal{O}\!\left(\sigma_{\min}(\Lambda_m)\right) 
= \mathcal{O}\!\left({1}/{m}\right) 
\;\ge\; \mathcal{O}\!\left({K}/{T}\right).
\end{align}
Consequently, we obtain
\begin{align}
\sigma_{\min}(\mathbb{T})
\;\ge\;
\mathcal{O}\!\left({K}/{T}\right)
\cdot \sigma_{\min}(\Gamma^*).
\end{align}
What remains, with respect to bounding the denominator in \eqref{eq:thmAttackProof:deltaTildeBound}, 
is to establish a high-probability bound on $\sigma_{\min}(\Gamma^*)$.  
To this aim, we invoke a well-established result from random matrix theory.  
There are several known high-probability bounds for the minimum and maximum singular values 
of random matrices with independently drawn entries of equal variance $1$.  
In particular, we have:
\begin{theorem}[Based on Theorem 6.1 of \citet{wainwright2019high}]
For any positive $\delta$, and assuming $d \gg T-K$, we have
\begin{align}
\sigma_{\min}(\Gamma^*)
\;\ge\;
\sqrt{d}\,(1-\delta) - \sqrt{T-K},
\end{align}
with probability at least $1 - e^{-d\delta^2/2}$.
\end{theorem}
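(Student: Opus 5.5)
The plan is to work with the transpose $\boldsymbol{A} \triangleq (\Gamma^*)^{\top} \in \reals^{d\times n}$, where $n \triangleq T-K$. This is a tall matrix ($d \gg n$) with the same singular values as $\Gamma^*$, so
\[
\sigma_{\min}(\Gamma^*) \;=\; \min_{\Vert\boldsymbol{u}\Vert_2=1}\Vert \boldsymbol{A}\boldsymbol{u}\Vert_2 .
\]
The argument has two steps: a lower bound on the expectation of this variational quantity, and a concentration bound around that expectation. I will carry it out first for standard Gaussian entries, which is the exact setting of Theorem 6.1 in \citet{wainwright2019high}, and then explain how the product-measure assumption of Theorem~\ref{thm:attackSizeGuarantee} is handled.

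\textbf{Step 1 (expectation).} I will write $\sigma_{\min}(\boldsymbol{A})$ as the min–max of the Gaussian process
\[
X_{\boldsymbol{u},\boldsymbol{v}} \triangleq \boldsymbol{v}^{\top}\boldsymbol{A}\boldsymbol{u},
\qquad \boldsymbol{u}\in\mathbb{S}^{n-1},\ \boldsymbol{v}\in\mathbb{S}^{d-1}.
\]
I will compare it with the decoupled process $Y_{\boldsymbol{u},\boldsymbol{v}} \triangleq \boldsymbol{g}^{\top}\boldsymbol{v}+\boldsymbol{h}^{\top}\boldsymbol{u}$, where $\boldsymbol{g}\sim\mathcal{N}(0,\boldsymbol{I}_d)$ and $\boldsymbol{h}\sim\mathcal{N}(0,\boldsymbol{I}_n)$. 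The increments of the two processes satisfy the hypotheses of Gordon's inequality. Applying it gives
\[
\mathbb{E}\,\sigma_{\min}(\boldsymbol{A}) \;\ge\; \mathbb{E}\Vert\boldsymbol{g}\Vert_2-\mathbb{E}\Vert\boldsymbol{h}\Vert_2 \;\ge\; \mathbb{E}\Vert\boldsymbol{g}\Vert_2-\sqrt{n}.
\]
To get a clean $\sqrt{d}$, I will use the standard bound $\mathbb{E}\Vert\boldsymbol{g}\Vert_2\ge \sqrt{d}\,(1-\mathcal{O}(1/d))$. The lower-order loss is absorbed into the slack $\delta$, or equivalently handled by the $\sqrt{n}$ versus $\sqrt{n}+1$ bookkeeping used in Wainwright's statement.

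\textbf{Step 2 (concentration).} By Weyl's inequality, $\boldsymbol{A}\mapsto\sigma_{\min}(\boldsymbol{A})$ is $1$-Lipschitz with respect to the Frobenius norm, that is, the Euclidean norm on the $dn$ entries. Gaussian concentration for Lipschitz functions then gives
\[
\mathbb{P}\big(\sigma_{\min}(\boldsymbol{A})\le \mathbb{E}\sigma_{\min}(\boldsymbol{A})-s\big)\le e^{-s^2/2}.
\]
Choosing $s=\sqrt{d}\,\delta$ and combining with Step 1 yields $\sigma_{\min}(\Gamma^*)\ge \sqrt{d}(1-\delta)-\sqrt{T-K}$ with probability at least $1-e^{-d\delta^2/2}$, which is exactly the claim. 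The assumption $d\gg T-K$ only ensures that the bound is positive and therefore meaningful.

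\textbf{Main obstacle.} The hard part is going beyond Gaussian entries. Theorem~\ref{thm:attackSizeGuarantee} only assumes product measures with identity covariance, and Gordon's comparison and Gaussian Lipschitz concentration are genuinely Gaussian tools. Talagrand's convex-distance inequality does not apply directly either, because $\sigma_{\min}$ is not convex in $\boldsymbol{A}$. To handle this, I would add the assumption that the entries are independent, unit-variance and sub-Gaussian with parameter $\kappa$, and replace the two steps with an $\varepsilon$-net argument. For each fixed unit vector $\boldsymbol{u}$, the quantity $\Vert\boldsymbol{A}\boldsymbol{u}\Vert_2^2$ is a sum of $d$ independent sub-exponential terms with mean $1$. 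Bernstein's inequality then concentrates $\Vert\boldsymbol{A}\boldsymbol{u}\Vert_2$ near $\sqrt{d}$ at rate $e^{-cd\delta^2/\kappa^4}$. A union bound over a $1/4$-net of $\mathbb{S}^{n-1}$, of size $9^n$, together with the usual approximation step, yields
\[
\sigma_{\min}\ge\sqrt{d}-C_\kappa\big(\sqrt{n}+\sqrt{d}\,\delta\big)
\]
with probability at least $1-e^{-c_\kappa d\delta^2}$ (Vershynin, Thm.~4.6.1). This recovers the stated bound up to absolute constants depending on $\kappa$. Those constants are harmless for Theorem~\ref{thm:attackSizeGuarantee}, since they only enter through the $\mathcal{O}(\cdot)$.
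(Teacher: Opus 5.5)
Your Gaussian argument (Gordon's comparison with $\boldsymbol{g}^{\top}\boldsymbol{v}+\boldsymbol{h}^{\top}\boldsymbol{u}$ for the mean, then Lipschitz concentration of $\sigma_{\min}$ for the tail) is the standard proof of the Gaussian-ensemble result that the paper cites from \citet{wainwright2019high} without giving an argument of its own. Your caveat is also correct: the exact constants hold for Gaussian entries but not for arbitrary unit-variance product measures, where only your $\kappa$-dependent sub-Gaussian version survives, and that version suffices for the $\mathcal{O}(\cdot)$ use downstream. The one loose step is the $\mathcal{O}(d^{-1/2})$ loss from $\mathbb{E}\Vert\boldsymbol{g}\Vert_2\ge\sqrt{d}(1-\mathcal{O}(1/d))$. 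It cannot be absorbed into $\delta$ without weakening the tail $e^{-d\delta^2/2}$, and the statement has no $+1$ slack to hide it in. Replace it by the sharp fact $\mathbb{E}\Vert\boldsymbol{g}\Vert_2-\mathbb{E}\Vert\boldsymbol{h}\Vert_2\ge\sqrt{d}-\sqrt{n}$ for $d\ge n$, which holds because $k\mapsto\sqrt{k}-\mathbb{E}\Vert\boldsymbol{g}_k\Vert_2$ is nonincreasing for standard Gaussian $\boldsymbol{g}_k\in\mathbb{R}^k$.
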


\noindent
The proof can be found in the reference.  
Therefore, for any $\delta > 0$,
\begin{align}
\label{eq:Thm3_3:finalI}
\mathbb{P}\Bigg(
\sigma_{\min}(\mathbb{T})
\;\ge\;
\mathcal{O}\!\left(\frac{K\sqrt{d}}{T}\right)
\Big(1 - \delta - \sqrt{\tfrac{T-K}{d}}\Big)
\Bigg)
\;\ge\; 1 - e^{-d\delta^2/2}.
\end{align}
Here, we used the fact that data-generating distributions are product measures with independent entries, such as $\mathcal{N}(\boldsymbol{\mu},\sigma^2\boldsymbol{I})$ for any mean vector $\boldsymbol{\mu}$ and variance-per-dimension $\sigma^2$. Next, we try to (with high probability) upper-bound the nominator of \eqref{eq:thmAttackProof:deltaTildeBound}.

\paragraph{High-probability bounds on $\Vert \boldsymbol{R}\Vert_2$.}
What remains is to bound $\Vert \boldsymbol{R}\Vert_2$ in 
\eqref{eq:thmAttackProof:deltaTildeBound}.  
From \eqref{eq:Thm3_3:wTExpAndVar} and \eqref{eq:Thm3_3:RNaiveBound}, and using Chernoff bound \citep{wainwright2019high}, we obtain a naive high-probability bound as follows:
\begin{align}
\mathbb{P}\Big(
\lvert R_i \rvert \;\ge\;
c^{1/2}\,\mathcal{O}\!\big(1+\sqrt{\log T}\big)
\Big) \;\leq\; e^{-c}, 
\quad \forall~c>0,~i\in[(K-1)(T-K)].
\end{align}
Therefore, using $(T-K)(K-1)\leq KT$ and assuming $T\gg 1$, we have
\begin{align}
\label{eq:Thm3_3:finalII}
\mathbb{P}\Big(
\Vert \boldsymbol{R}\Vert_2
\;\ge\;
\mathcal{O}\!\big(\sqrt{cKT\log T}\big)
\Big) \;\leq\; e^{-c}, 
\quad \forall~c>0.
\end{align}

Combining \eqref{eq:Thm3_3:finalI} and \eqref{eq:Thm3_3:finalII}, 
and applying the union bound, we obtain the following 
high-probability bound for $\Vert \widetilde{\boldsymbol{\delta}} \Vert_2$ 
(and hence $\Vert \boldsymbol{\delta}^* \Vert_2$), for any $\delta,c>0$:
\begin{align}
\mathbb{P}\Bigg(
\Vert \widetilde{\boldsymbol{\delta}} \Vert_2
\;\leq\;
\mathcal{O}\!\left(
\frac{T^{3/2}\sqrt{\log T}}{\sqrt{Kd}}
\right)
\cdot
\frac{\sqrt{c}}{1-\delta-\sqrt{(T-K)/d}}
\Bigg)
\;\ge\; 1 - e^{-c} - e^{-d\delta^2/2}.
\end{align}
Finally, choosing $c = \log d$ and $\delta = d^{-1/3}$ yields the bound 
stated in the theorem and completes the proof.
\end{proof}

\section{Analysis of Constraint Number in the OSA Method}
\label{app:logt}
We study the effect of increasing the time horizon $T$ on the number of constraints in the \emph{OSA} method. Our observations show that, while the number of constraints grows with $T$, this growth follows an $O(\log T)$ pattern, as illustrated in Fig.~\ref{fig:logT} for $T$ ranging from $10$ to $100{,}000$. To confirm this behavior, we repeat the experiment for two values of the number of arms, $K=3$ and $K=5$, and observe the same pattern in both cases. This small number of constraints is one of the main reasons that makes the \emph{OSA} method so fast.

\begin{figure}[!t]
    \centering
    \includegraphics[width=0.6\textwidth]{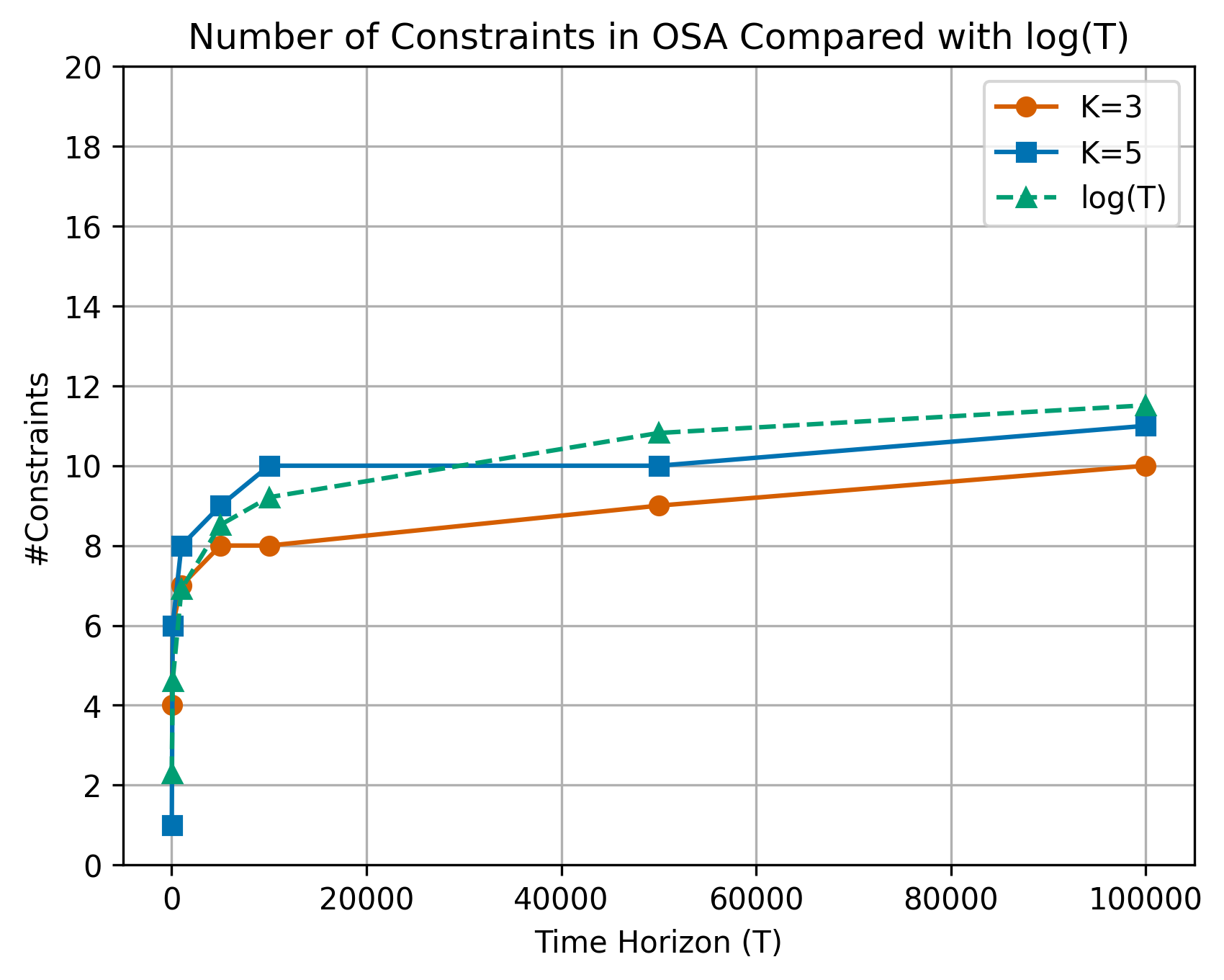}
    \caption{Number of constraints in the \emph{OSA} method as a function of the time horizon $T$. The results show that the number of constraints grows logarithmically with $T$ ($O(\log T)$) for both $K=3$ and $K=5$ arms, explaining the high efficiency of the method.}
    \label{fig:logT}
\end{figure}

\section{Additional Details on ETC and \texorpdfstring{$\epsilon$}{epsilon}-greedy Algorithms}
\label{app:more_bandit_algorithms}
\subsection{ETC Algorithm} 
\label{app:etc_algo}

The Explore-Then-Commit (ETC)~\citep{etc} algorithm is a simple bandit strategy that divides the learning process into two distinct phases. During the exploration phase, the algorithm uniformly samples each arm a predetermined number of times to gather initial reward estimates. Once the exploration phase is complete, ETC commits to the arm with the highest estimated reward for the remainder of the time horizon. This approach provides a balance between exploration and exploitation while remaining easy to implement and analyze. The length of the exploration phase can be tuned to optimize performance depending on the problem setting.

\noindent To attack ETC, the adversary only needs to wait until the exploration phase is complete. At the transition between the exploration and commit phases, the adversary manipulates the rewards so that the empirical mean of the target arm exceeds that of the optimal arm. During the subsequent commit phase, the algorithm then selects the target arm. Since the ETC constraint set involves only a single constraint, the attack is fast and straightforward, particularly in high-dimensional settings.

\subsection{\texorpdfstring{$\epsilon$}{epsilon}-greedy Algorithm} 
\label{app:epsilon_algo}

The $\epsilon$-greedy~\citep{epsilon_greedy} algorithm selects a random arm with probability $\epsilon$ (exploration) and the arm with the highest estimated reward with probability $1-\epsilon$ (exploitation) at each time step. In our experiments, we use an adaptive $\epsilon$ that decreases over time, allowing the algorithm to explore more during the early stages and gradually exploit the best-known arm as more information is gathered. This time-dependent strategy enables the algorithm to balance exploration and exploitation effectively, and the choice of the initial $\epsilon$ and its decay schedule directly influences its performance.

\noindent To attack $\epsilon$-greedy, the adversary follows the same principle as \emph{OSA}. During random arm selections, no attack is performed. However, when the algorithm selects the arm with the highest empirical mean, the adversary intervenes to prevent the optimal arm from being pulled. Specifically, the target arm is chosen as the runner-up, and its empirical value is increased to surpass the optimal arm, as modifying the runner-up is simpler than other arms. This approach makes the attack fast and highly efficient.

For the ETC algorithm, we set the number of exploration rounds to $m = 5$, meaning each arm is selected exactly five times before the commit phase begins. For the $\epsilon$-greedy algorithm, we set the initial $\epsilon$ to $0.1$ and apply decay over time, with a minimum value of $0.01$.

\begin{table*}[t!]
    \caption{Comparison of attack performance on linear and non-linear reward models using the \emph{$\epsilon$-greedy} and \emph{ETC} methods. All attacks achieve a $100\%$ success rate ($ASR=100$). The table illustrates how varying the parameters $K$ and $D$ (or $\max_i W_i$) affects the attack duration. In this experiment, we set $T=100$ and the number of exploration rounds for \emph{ETC} to $m=5$.}
    \centering
    \begin{tabular}{l c c c c c c}
        \toprule
        Reward Models & \multicolumn{3}{c}{Linear Reward Model} & \multicolumn{3}{c}{Non-linear Reward Model} \\
        \cmidrule(lr){2-4} \cmidrule(lr){5-7}
        & $K$ & $d$ & Attack Time & $K$ & $\max_i~W_i$ & Attack Time \\
        \midrule
        \rowcolor[gray]{0.9}
        \multicolumn{7}{l}{\textbf{$\epsilon$-greedy Attack}} 
        \vspace*{1mm}
        \\
         & 3 & 10000 & 00:00 & 3 & 3000  & 00:22 \\
         & 3 & 1000  & 00:00 & 3 & 6000  & 00:12 \\
         & 5 & 10000 & 00:00 & 5 & 3000  & 00:18 \\
         & 5 & 1000  & 00:00 & 5 & 6000  & 00:13 \\
        \midrule
        \rowcolor[gray]{0.9} 
        \multicolumn{7}{l}{\textbf{Explore-Then-Commit (ETC)}} 
        \vspace*{1mm}
        \\
         & 3 & 10000 & 00:00 & 3 & 3000  & 00:03 \\
         & 3 & 1000  & 00:00 & 3 & 6000  & 00:06 \\ 
         & 5 & 10000 & 00:00 & 5 & 3000  & 00:04 \\
         & 5 & 1000  & 00:00 & 5 & 6000  & 00:09 \\
        \bottomrule
    \end{tabular}
    \label{tab:other_bandit_algo}
\end{table*}

\section{Implementation Details of Reward Model Training for Synthetic Data}
\label{app:training_reward_model}
We train several neural networks, each with a single hidden layer whose size varies depending on the task. The networks are optimized using Adam~\citep{adam} and employ the ReLU activation function~\citep{relu}. Training is performed using the mean squared error (MSE) loss.
The logged data is split into training and validation sets, with 80\% used for training and 20\% reserved for validation. Networks are trained for $100$ epochs with a batch size of $1024$ and a learning rate of $10^{-3}$.
For supervision, the ground truth is computed as the inner product of the data with the mean of the optimal arm.

\begin{figure*}[!t]
    \centering
    \begin{subfigure}{0.48\textwidth}
        \centering
        \includegraphics[width=\linewidth]{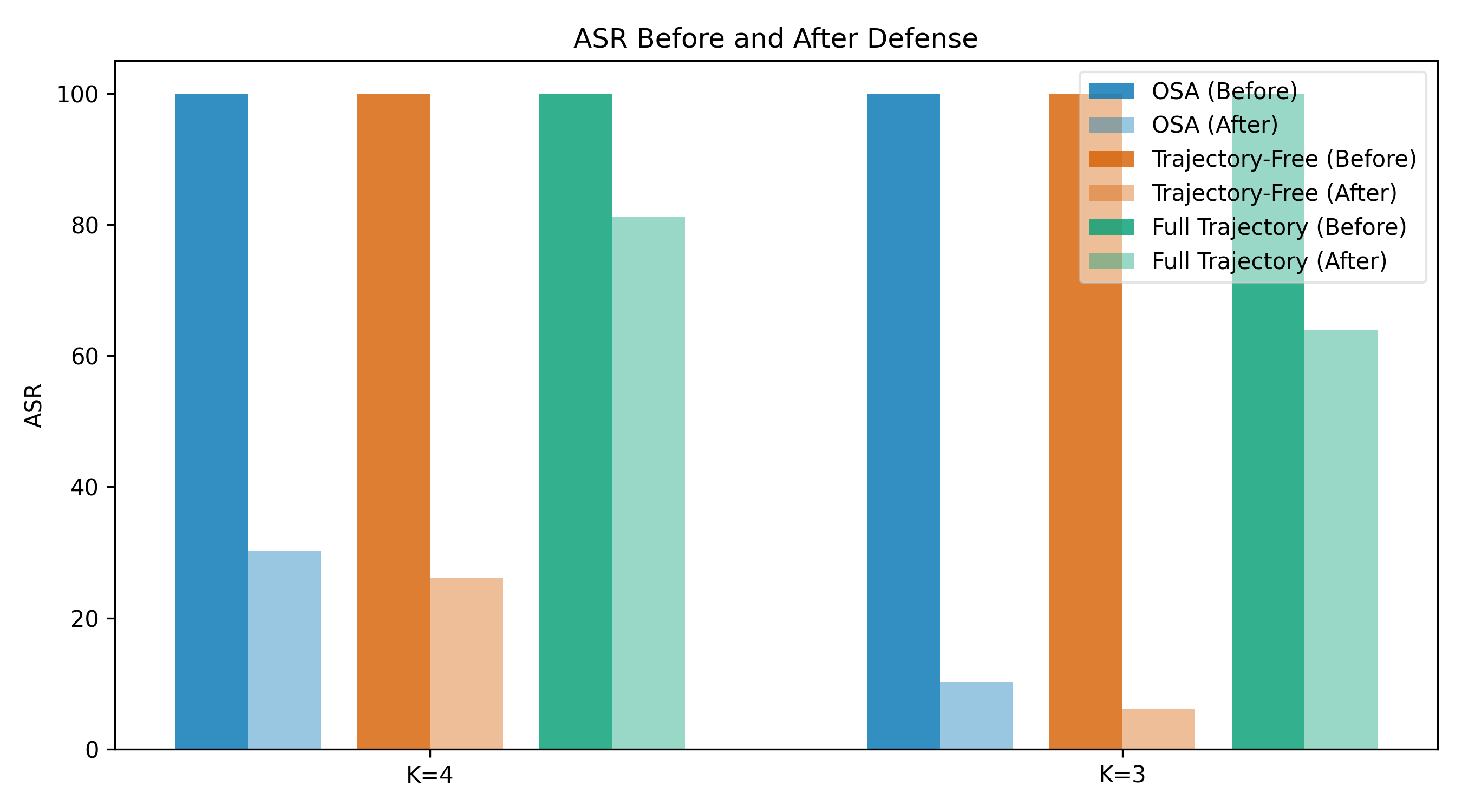}
        \caption{ASR before and after defense on the linear reward model when $T/4$ of the logged data is shuffled. The defense significantly reduces the attack success rates across all attack methods.}
        \label{fig:defense_t4}
    \end{subfigure}
    \hfill
    \begin{subfigure}{0.48\textwidth}
        \centering
        \includegraphics[width=\linewidth]{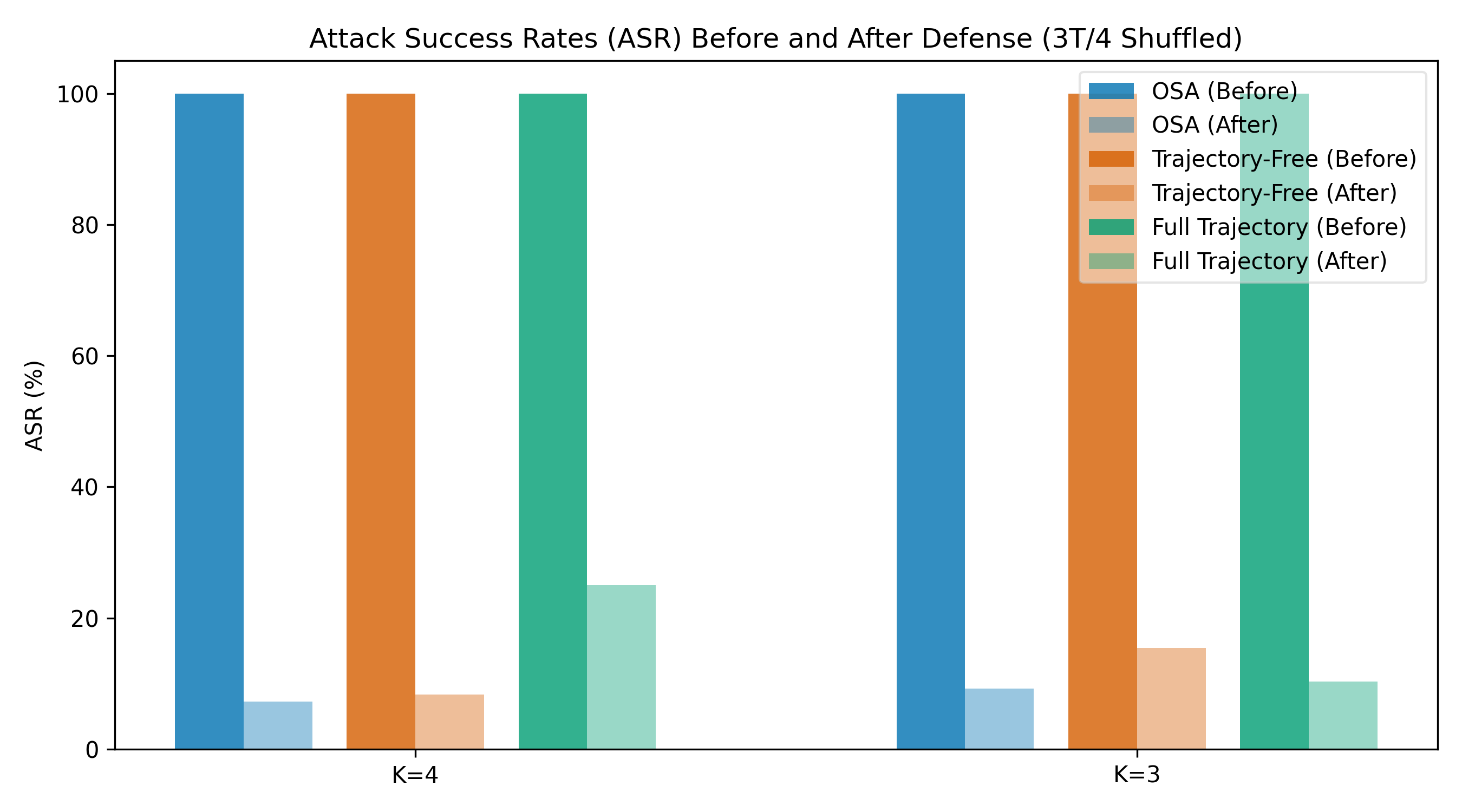}
        \caption{ASR before and after defense on the linear reward model when $3T/4$ of the logged data is shuffled. The defense further reduces attack success rates compared to the $T/4$ case.}
        \label{fig:defense_3t4}
    \end{subfigure}
    \caption{Comparison of ASR before and after defense on the linear reward model under different fractions of logged data shuffled as a defense.}
    \label{fig:defense_comparison}
\end{figure*}

\section{Additional Results on Attacking the Real Dataset}
\label{app:random_reward_model}
To demonstrate the generality of our attack, in addition to two pretrained reward models commonly used to evaluate generative image models, Image Reward~\citep{imagereward} and Aesthetic Model~\citep{aesthetic}, we attack a randomized reward network. This random reward model is initialized with random weights and we freeze the majority of its parameters except for a single hidden layer. Despite its random initialization, the model’s trajectory can still be hijacked by our method: the ASR distribution for this random reward model closely matches those of the Image Reward and Aesthetic models in Figure~\ref{fig:asr_reward_models}. Figure~\ref{fig:random_reward_model} shows the distribution of our attack on the random reward model.

\begin{figure}[H]
    \centering
    \includegraphics[width=0.6\textwidth]{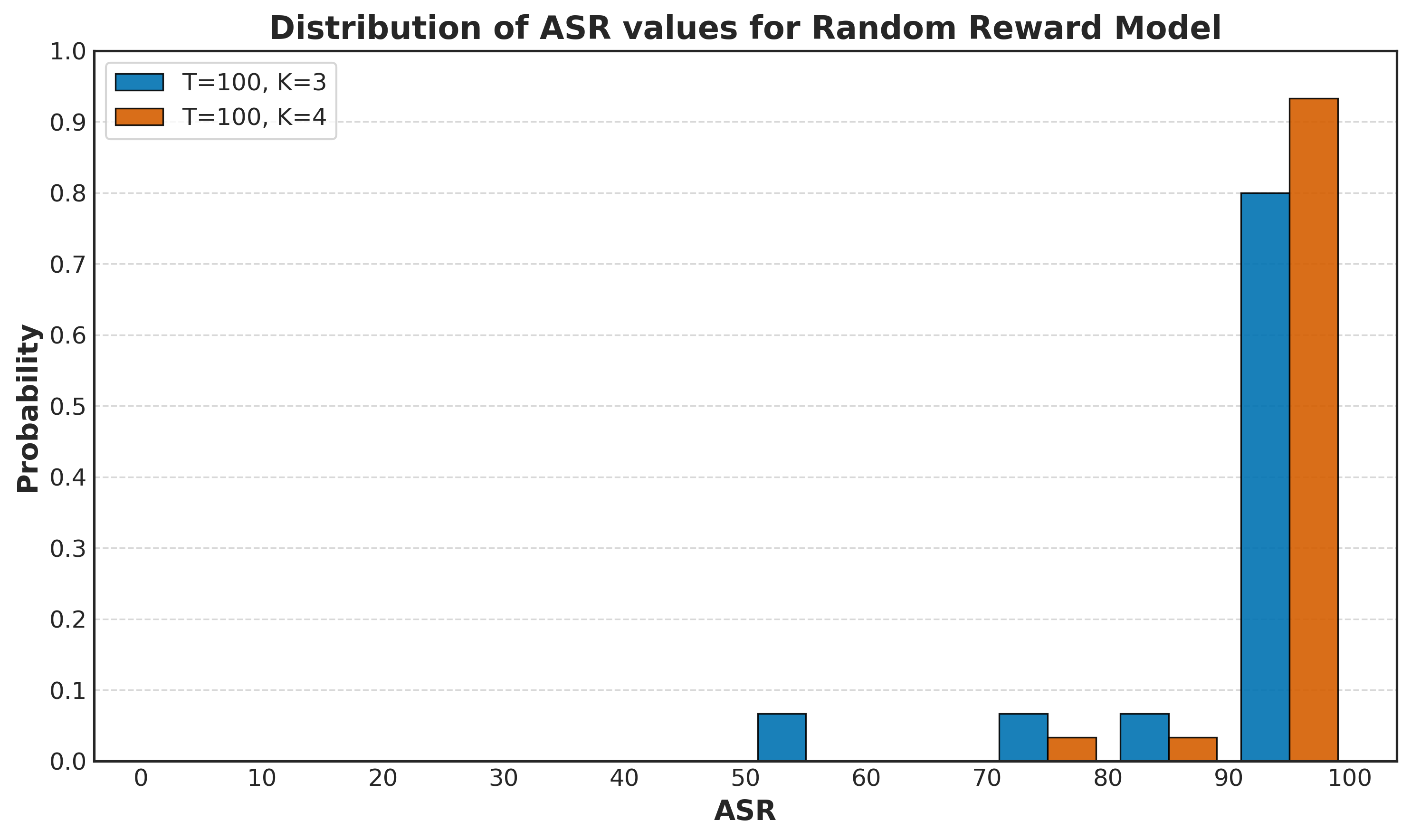}
    \caption{ASR distribution on the random reward model, showing vulnerability similar to the Image Reward and Aesthetic models despite random weights.}
    \label{fig:random_reward_model}
\end{figure}

\section{Additional Results on the Defense Method}
\label{app:more_defense}
The effectiveness of the shuffling defense can be further evaluated by varying the fraction of logged data that is shuffled. In particular, we consider two additional settings where $T/2$ and $3T/4$ of the logged data are shuffled before running the bandit algorithm. The results demonstrate that increasing the portion of shuffled data substantially reduces the attack success rate (ASR) across all attack methods. Detailed results for these settings are presented in Figure~\ref{fig:defense_comparison}.

% \begin{figure*}[!t]
%     \centering
%     \begin{subfigure}{0.48\textwidth}
%         \centering
%         \includegraphics[width=\linewidth]{figs/dfense_t4.png}
%         \caption{ASR before and after defense on the linear reward model when $T/4$ of the logged data is shuffled. The defense significantly reduces the attack success rates across all attack methods.}
%         \label{fig:defense_t4}
%     \end{subfigure}
%     \hfill
%     \begin{subfigure}{0.48\textwidth}
%         \centering
%         \includegraphics[width=\linewidth]{figs/dfense_3t4.png}
%         \caption{ASR before and after defense on the linear reward model when $3T/4$ of the logged data is shuffled. The defense further reduces attack success rates compared to the $T/4$ case.}
%         \label{fig:defense_3t4}
%     \end{subfigure}
%     \caption{Comparison of ASR before and after defense on the linear reward model under different fractions of logged data shuffled as a defense.}
%     \label{fig:defense_comparison}
% \end{figure*}

\section{Geometry of Attack Vector}
\label{app:geometry}
We visualize the geometry of the attack vector relative to other vectors, we reduce the perturbation space to three dimensions using PCA. As shown in Fig.~\ref{fig:plot_3d_1} and Fig.~\ref{fig:plot_3d_2}, the perturbations are neither trivial nor simply the reverse of the vector $w$. This demonstrates that the attack is subtle, imperceptible, and not easily detectable from the bandit algorithm’s perspective.

\begin{figure}[H]
    \centering
    \begin{subfigure}{0.48\textwidth}
        \centering
        \includegraphics[width=\linewidth]{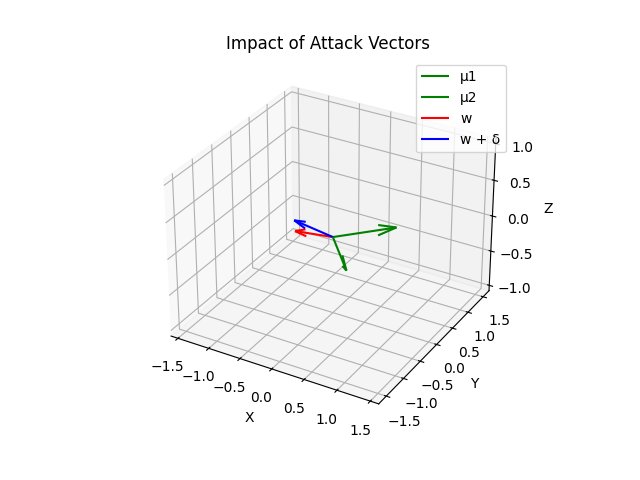}
        \caption{$T=100, K=3, D=1000$, \emph{OSA}}
        \label{fig:plot_3d_1}
    \end{subfigure}
    \hfill
    \begin{subfigure}{0.48\textwidth}
        \centering
        \includegraphics[width=\linewidth]{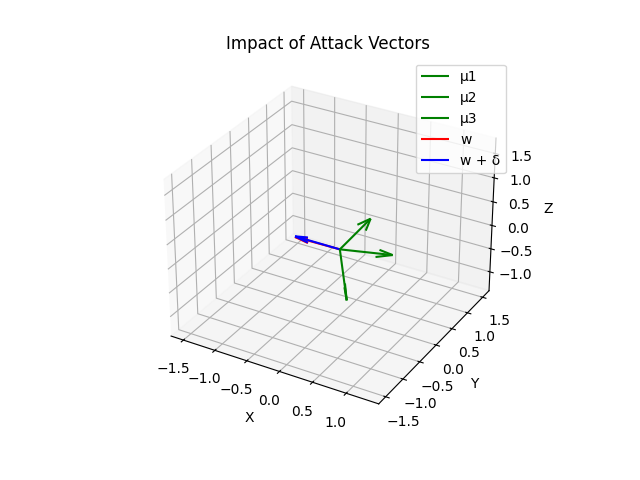}
        \caption{$T=100, K=4, D=1000$, \emph{OSA}}
        \label{fig:plot_3d_2}
    \end{subfigure}
    \caption{3D visualization of perturbations for different values of $K$.}
    \label{fig:plot_3d}
\end{figure}

% \begin{figure}[!t]
%     \centering
%     \includegraphics[width=\textwidth]{figs/combined_asr_plots.png}
%     \caption{\textbf{Left}: Attack success rate as a function of the hidden layer width in the reward model ($T=100$). 
%     The results show that a sufficiently wide hidden layer is required for the attack to succeed; once the width exceeds 750 neurons, the ASR reaches 100\%. \textbf{Right}: Attack success rate versus the $\ell_2$ norm of random noise perturbations. The results show that even with increasing perturbation magnitude, the ASR remains nearly constant, demonstrating that random noise is largely ineffective in attacking the bandit algorithm.}    
%     \label{fig:asr}
% \end{figure}

\section{A Realistic and Practical Threat Model for Modern Bandit Systems}
\label{app:more_discussion}
Our setting, similar to most modern and practical MAB formulations, assumes that pulling an arm produces a data sample, not a direct reward value, and the bandit/learner then evaluates the reward based on that sample. For example, an image generator or an LLM returns an image or text when queried (equivalent to pulling an arm), and the learner assesses a reward (e.g., fidelity or quality) using some reward function. Such reward functions may be linear or neural networks. Here, sample generation is handled by the environment, to which we assume no access, whereas reward functions are typically derived from publicly available pretrained models that can be poisoned and later downloaded by an unknowing learner for bandit training. Several of the cited works~\citep{mix_ucb, ucb_select} in our Introduction highlight exactly this kind of setting.
Another point to mention is that we believe our setting is more realistic in practice than previous attack strategies. Instantiated attack perturbation requires the adversary to be online, active, and present inside the private server where the bandit is trained. This is actually too much to assume from the adversary’s side. We (both theoretically and experimentally) show that poisoning the reward function \emph{before} training even begins can still cause equally significant damage, while the attack itself is much simpler to perform. We believe it is important for the community to recognize that such a vulnerability exists in offline bandits, which may motivate future efforts toward developing defense mechanisms.

\section{Visualization of OSA Method on Real Reward Model}
\label{app:visualization}
Figure~\ref{fig:real_data_results} illustrates our OSA attack on the Aesthetic Reward Model~\citep{aesthetic}.
The first panel presents the baseline experiment: UCB~\citep{ucb} is run using the reward scores from the model, which leads it to consistently select SD3~\citep{sd3} as the optimal arm since it achieves the highest aesthetic scores. The plot depicts the arm-selection trajectory over time under this setting.
The second panel shows the same experiment under our attack. By manipulating the weights of reward model, the attacker hijacks UCB so that it selects the Openjourney model as the optimal arm. Timesteps targeted by OSA method are highlighted in gray.

% \clearpage

\begin{figure}[t!]
    \centering
    % First subfigure
    \begin{subfigure}{\textwidth}
        \centering
        \includegraphics[width=\textwidth]{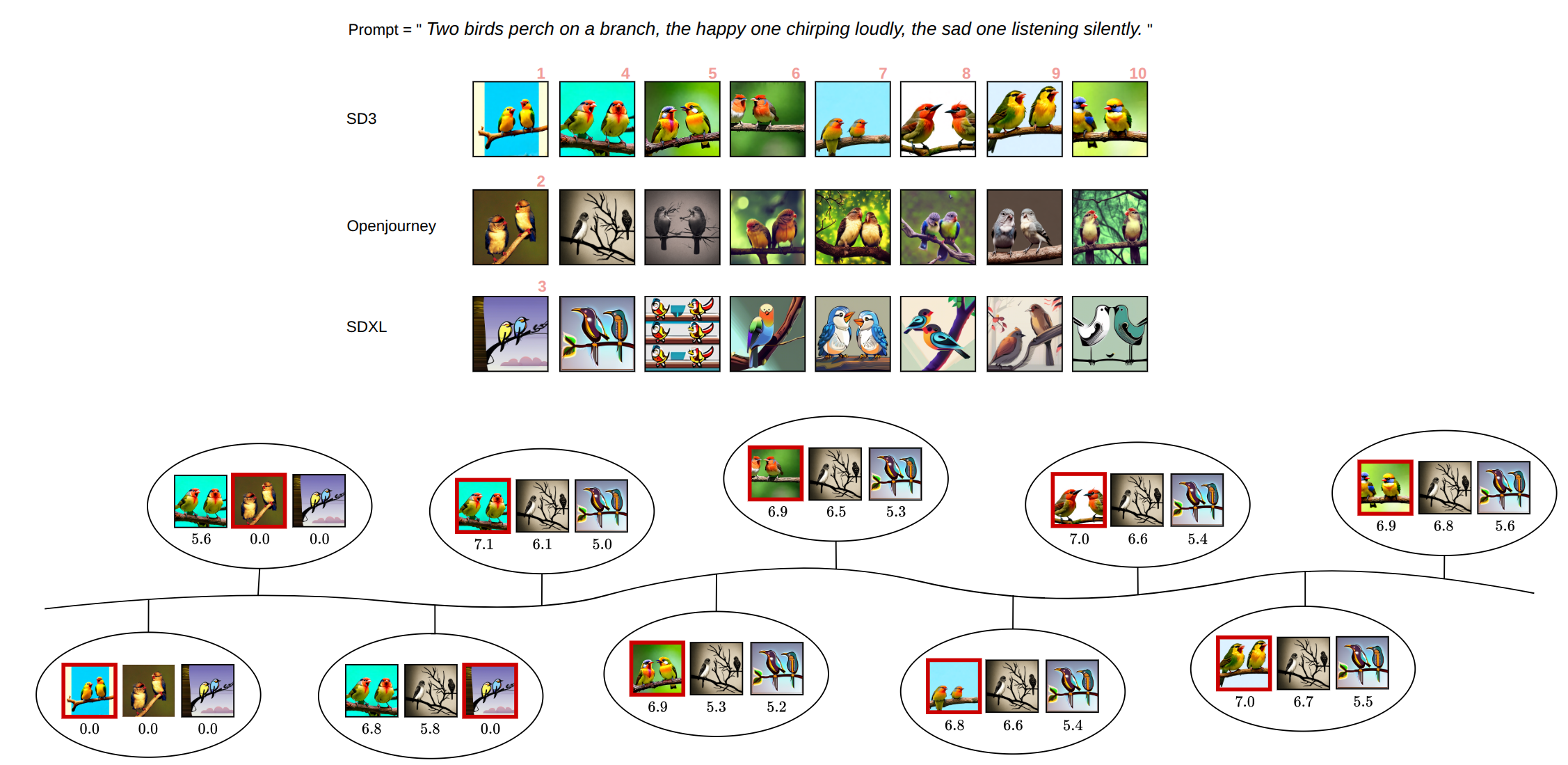}
        \caption{Results with UCB on real data.}
        \label{fig:ucb_real_data}
    \end{subfigure}

    \vspace{3em}

    % Second subfigure
    \begin{subfigure}{\textwidth}
        \centering
        \includegraphics[width=\textwidth]{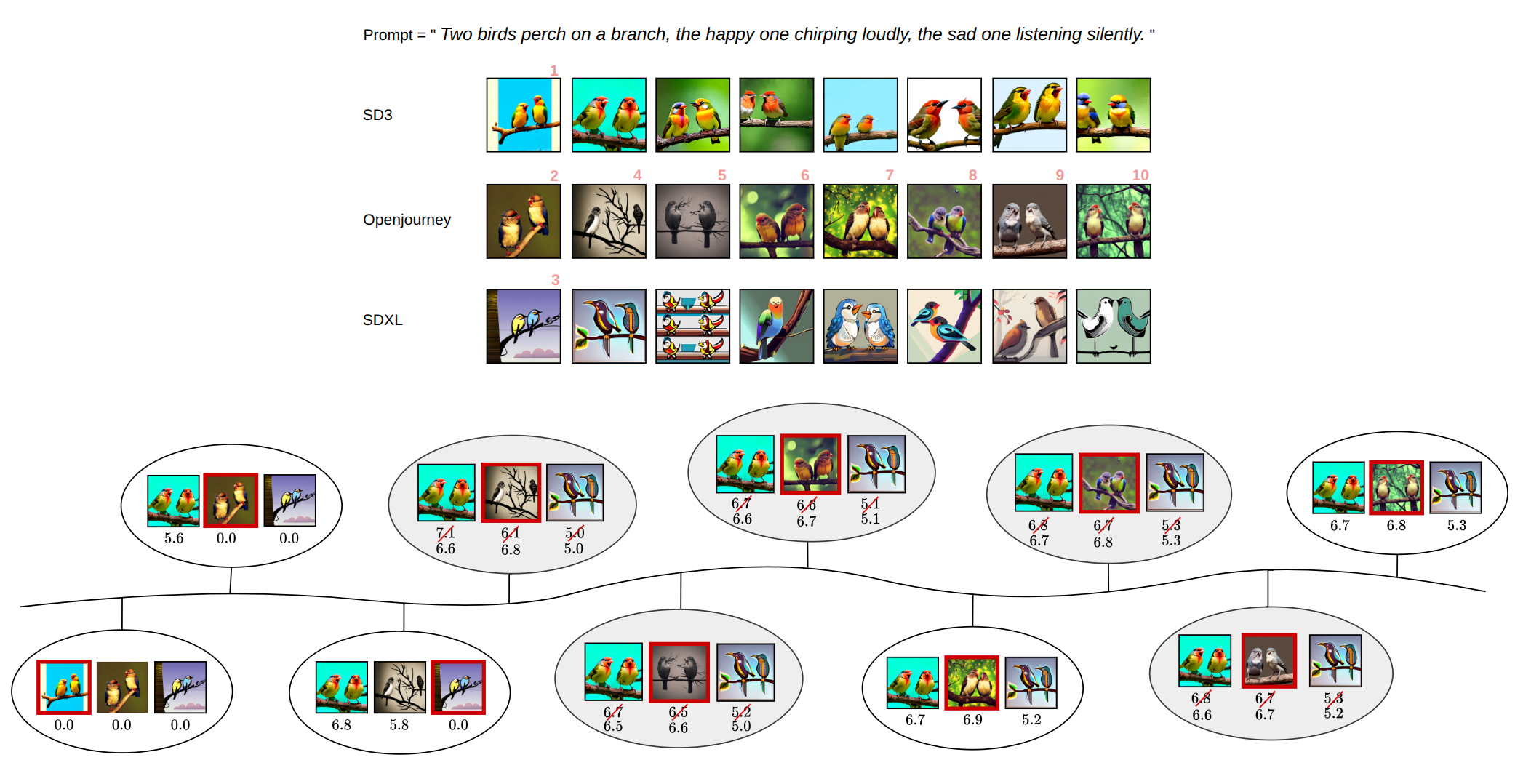}
        \caption{Results with OSA on real data.}
        \label{fig:osa_real_data}
    \end{subfigure}

    \vspace{1em}

    \caption{Comparison of methods on real data. 
    (a) UCB results. (b) OSA results.}
    \label{fig:real_data_results}
\end{figure}

\clearpage

\section{Algorithms}
\label{app:algorithms}
The detailed algorithms for the \emph{Full Trajectory Attack} (Algorithm~\ref{alg:full_trajectory}), \emph{OSA Attack} (Algorithm~\ref{alg:osa}), and \emph{Trajectory-Free Attack} (Algorithm~\ref{alg:trajectory_free_attack}) are provided below. Each follows the same structure with three functions: \texttt{select\_arm}, \texttt{find\_perturbation}, and \texttt{update}, with differences noted in the comments.

\begin{algorithm}[H]
\caption{Full Trajectory Attack}
\label{alg:full_trajectory}
\DontPrintSemicolon

%---------------- Function: select_arm ----------------%
\SetKwFunction{FSelectArm}{select\_arm}
\SetKwProg{Fn}{Function}{:}{}
\Fn{\FSelectArm{t, K, $\widetilde{A}$}}{
    \eIf{$t \le K$}{
        \Return{$t$, False} 
    }{
        \tcp{Adversary selects an arm according to the target trajectory}
        $arm \gets \widetilde{A}_t$\;
        \Return{$arm$, True}
    }
}

%---------------- Function: find_perturbation ----------------%
\SetKwFunction{FPert}{find\_perturbation}
\Fn{\FPert{K, $\boldS$, $\boldG$, $\boldF$, $\mathcal{C}$, $\{ \mu_i \}_{i=1}^K$, $\text{use\_NN}$}}{
    \tcp{Consider all the inequalities}
    \For{$j \gets 1$ \textbf{to} $K$, $j \neq arm$}{
        \eIf{\text{use\_NN}}{
            $d_j \gets \boldG^{(arm)} - \boldG^{(j)}$\;
            $c_j \gets \sqrt{\tfrac{2 \log t}{N_j}} - \sqrt{\tfrac{2 \log t}{N_{arm}}} + (\boldF^{(j)} - \boldF^{(arm)})$\;
        }{
            $d_j \gets \boldS^{(arm)} - \boldS^{(j)}$\;
            $c_j \gets \sqrt{\tfrac{2 \log t}{N_j}} - \sqrt{\tfrac{2 \log t}{N_{arm}}} - {\mu}_j^\top d_j$\;
        }
        Append $(d_j, c_j)$ to $\mathcal{C}$\;
    }

    $\mathcal{M} \gets \emptyset$\;
    \ForEach{$(d, c) \in \mathcal{C}$}{
        $\mathcal{M} \gets \mathcal{M} \,\cup\, \{\delta^\top d \ge c + 10^{-6}\}$\;
    }
    
    $\delta \gets \text{solve QP in Eq.~\eqref{eq:fullTrajAttack} with constraints } \mathcal{M}$\;
    \Return{$\delta$}\;
}

%---------------- Function: update ----------------%
\SetKwFunction{FUpdate}{update}
\Fn{\FUpdate{$\boldX$, $\boldS^{(arm)}$, $\boldF^{(arm)}$, $\boldG^{(arm)}$, $\text{use\_NN}$}}{
    $N_{arm} \gets N_{arm} + 1$\;
    \eIf{$\text{use\_NN}$}{
        $\boldG^{(arm)} \gets \boldG^{(arm)} + \tfrac{1}{N_{arm}}(\nabla NN_\theta(\boldX) - \boldG^{(arm)})$\;
        $\boldF^{(arm)} \gets \boldF^{(arm)} + \tfrac{1}{N_{arm}}(NN_\theta(\boldX) - \boldF^{(arm)})$\;
    }{
        $\boldS^{(arm)} \gets \boldS^{(arm)} + \tfrac{1}{N_{arm}}(\boldX - \boldS^{(arm)})$\;
    }
}

%---------------- Main Loop ----------------%
\tcp{Inputs: $\{ \mathcal{D}_i \}_{i=1}^K$ Logged data,  $\widetilde{A}$ is target trajectory, $\{ \mu_i \}_{i=1}^K$ are inferred means, \text{use\_NN} is a boolean flag, $NN_\theta$ is the reward model}
\KwIn{$K, d, T, \{ \mathcal{D}_i \}_{i=1}^K, \texttt{use\_NN}, NN_{\theta}, \{ \mu_i \}_{i=1}^K, \widetilde{A}$}
\KwOut{Perturbation $\delta$}
\tcp{$N$: counts, $\boldS$: sample mean, $\boldG$: mean gradient, $\boldF$: mean network output, $\mathcal{C}$: constraint set}
Initialize $N, \boldS, \boldG, \boldF \gets 0$\;
Initialize $\mathcal{C} \gets \emptyset$, $\delta \gets 0$\;
\For{$t \gets 1$ \textbf{to} $T$}{
    $arm, do\_attack \gets$ \FSelectArm{t, K, $\widetilde{A}$}\;

    \If{$do\_attack$}{
        $\delta \gets$ \FPert{K, $\boldS$, $\boldG$, $\boldF$, $\mathcal{C}$, $\{ \mu_i \}_{i=1}^K$, \text{use\_NN}}\;
    }
    Pull $arm$, observe $\boldX \sim \mathcal{D}_{arm}$\;
    \FUpdate{$\boldX$, $\boldS^{(arm)}$, $\boldF^{(arm)}$, $\boldG^{(arm)}$, $\text{use\_NN}$}\;
}
\Return{$\delta$}
\end{algorithm}

\begin{algorithm}[H]
\caption{OSA Attack: Online Score-Aware Attack}
\label{alg:osa}
\DontPrintSemicolon

%---------------- Function: select_arm ----------------%
\SetKwFunction{FSelectArm}{select\_arm}
\SetKwProg{Fn}{Function}{:}{}
\Fn{\FSelectArm{$t, K, N, \boldR$}}{
    \eIf{$t \le K$}{
        \Return{$t$, False}
    }{
        \tcp{Select the runner-up arm for attack}
        \For{$j \gets 1$ \textbf{to} $K$}{
            $\mathrm{UCB}[j] \gets \boldR^{(j)} + \sqrt{\tfrac{2 \log t}{N_j}}$\;
        }
        $arm \gets \arg\max_j \mathrm{UCB}[j]$\;
        \tcp{Attack if the arm with the highest UCB is the optimal arm}        $do\_attack \gets (arm == 1)$\;
        \Return{$arm$, $do\_attack$}
    }
}

%---------------- Function: find_perturbation ----------------%
\SetKwFunction{FPert}{find\_perturbation}
\Fn{\FPert{$\boldS$, $\boldG$, $\boldF$, $\mathcal{C}$, $\mu_1$, $\text{use\_NN}$}}{
    \tcp{Construct inequality constraints to prevent selection of optimal arm}
    \eIf{\texttt{use\_NN}}{
        $d \gets \boldG^{(arm)} - \boldG^{(1)}$\;
        $c \gets \big(\sqrt{\tfrac{2 \log t}{N_{1}}} - \sqrt{\tfrac{2 \log t}{N_{arm}}}\big) + (\boldF^{(1)} - \boldF^{(arm)})$\;
    }{
        $d \gets \boldS^{(arm)} - \boldS^{(1)}$\;
        $c \gets \big(\sqrt{\tfrac{2 \log t}{N_{1}}} - \sqrt{\tfrac{2 \log t}{N_{arm}}}\big) - \mu_{1}^\top d$\;
    }
    Append $(d, c)$ to $\mathcal{C}$\;

    $\mathcal{M} \gets \emptyset$\;
    \ForEach{$(d, c) \in \mathcal{C}$}{
        $\mathcal{M} \gets \mathcal{M} \,\cup\, \{\delta^\top d \ge c + 10^{-6}\}$\;
    }

    $\delta \gets \text{solve QP in Eq.~\eqref{eq:fullTrajAttack} with constraints } \mathcal{M}$\;
    \Return{$\delta$}\;
}

%---------------- Function: update ----------------%
\SetKwFunction{FUpdate}{update}
\Fn{\FUpdate{$\boldX$, $\boldR$, $\boldS^{(arm)}$, $\boldF^{(arm)}$, $\boldG^{(arm)}$, $\text{use\_NN}, \delta$}}{
    $N_{arm} \gets N_{arm} + 1$\;
    \eIf{$\text{use\_NN}$}{
        $\boldG^{(arm)} \gets \boldG^{(arm)} + \tfrac{1}{N_{arm}}(\nabla NN_\theta(\boldX) - \boldG^{(arm)})$\;
        $\boldF^{(arm)} \gets \boldF^{(arm)} + \tfrac{1}{N_{arm}}(NN_\theta(\boldX) - \boldF^{(arm)})$\;
    }{
        $\boldS^{(arm)} \gets \boldS^{(arm)} + \tfrac{1}{N_{arm}}(\boldX - \boldS^{(arm)})$\;
    }
    \tcp{Update empirical rewards using current $\delta$}
    Update $R$ using $\delta$\;
}

%---------------- Main Loop ----------------%
\tcp{Inputs: $\{ \mathcal{D}_i \}_{i=1}^K$ Logged data, $\{ \mu_i \}_{i=1}^K$ are inferred means, \text{use\_NN} is a boolean flag, $NN_\theta$ is the reward model}
\KwIn{$K, d, T, \{\mathcal{D}_i\}_{i=1}^K, \texttt{use\_NN}, NN_\theta, \{ \mu_i \}_{i=1}^K \text{are inferred means}$}
\KwOut{Perturbation $\delta$}
\tcp{$N$: counts, $\boldS$: sample mean, $\boldG$: mean gradient, $\boldF$: mean network output, $\boldR$: empirical rewards, $\mathcal{C}$: constraint set}
Initialize $N, \boldS, \boldG, \boldF \gets 0$\;
Initialize $\mathcal{C} \gets \emptyset$, $\delta \gets 0$\;

\For{$t \gets 1$ \textbf{to} $T$}{
    $arm, do\_attack \gets$ \FSelectArm{$t, K, N, \boldR$}\;

    \If{$do\_attack$}{
        $\delta \gets$ \FPert{$\boldS$, $\boldG$, $\boldF$, $\mathcal{C}$, $\mu_1$, \text{use\_NN}}\;
    }

    Pull $arm$, observe $\boldX \sim \mathcal{D}_{arm}$\;
    \FUpdate{$\boldX$, $\boldR$, $\boldS^{(arm)}$, $\boldF^{(arm)}$, $\boldG^{(arm)}$, $\text{use\_NN}, \delta$}\;
}
\Return{$\delta$}
\end{algorithm}

\begin{algorithm}[H]
\caption{Trajectory-Free Attack}
\label{alg:trajectory_free_attack}
\DontPrintSemicolon

%---------------- Function: select_arm ----------------%
\SetKwFunction{FSelectArm}{select\_arm}
\SetKwProg{Fn}{Function}{:}{}
\Fn{\FSelectArm{$t, K$}}{
    \eIf{$t \le K$}{
        \Return{$t$, False} 
    }{
        \tcp{Select a sub-optimal arm in round-robin order}
        $arm \gets \text{next sub-optimal arm}$\;
        \Return{$arm$, True}
    }
}

%---------------- Function: find_perturbation ----------------%
\SetKwFunction{FPert}{find\_perturbation}
\Fn{\FPert{$\boldS, \boldG, \boldF, \mathcal{C}, \mu_1, \text{use\_NN}$}}{
    \tcp{Construct constraints only w.r.t. optimal arm}
    \eIf{\texttt{use\_NN}}{
        $d \gets \boldG^{(arm)} - \boldG^{(1)}$\;
        $c \gets \big(\sqrt{\tfrac{2 \log t}{N_{1}}} - \sqrt{\tfrac{2 \log t}{N_{arm}}}\big) + (\boldF^{(1)} - \boldF^{(arm)})$\;
    }{
        $d \gets \boldS^{(arm)} - \boldS^{(1)}$\;
        $c \gets \big(\sqrt{\tfrac{2 \log t}{N_{1}}} - \sqrt{\tfrac{2 \log t}{N_{arm}}}\big) - {\mu}_{1}^\top d$\;
    }
    Append $(d, c)$ to $\mathcal{C}$\;

    $\mathcal{M} \gets \emptyset$\;
    \ForEach{$(d, c) \in \mathcal{C}$}{
        $\mathcal{M} \gets \mathcal{M} \,\cup\, \{\delta^\top d \ge c + 10^{-6}\}$\;
    }

    $\delta \gets \text{solve QP in Eq.~\eqref{eq:fullTrajAttack} with constraints } \mathcal{M}$\;
    \Return{$\delta$}\;
}

%---------------- Function: update ----------------%
\SetKwFunction{FUpdate}{update}
\Fn{\FUpdate{$\boldX, \boldS^{(arm)}, \boldF^{(arm)}, \boldG^{(arm)}, \text{use\_NN}$}}{
    $N_{arm} \gets N_{arm} + 1$\;
    \eIf{\texttt{use\_NN}}{
        $\boldG^{(arm)} \gets \boldG^{(arm)} + \tfrac{1}{N_{arm}}(\nabla NN_\theta(\boldX) - \boldG^{(arm)})$\;
        $\boldF^{(arm)} \gets \boldF^{(arm)} + \tfrac{1}{N_{arm}}(NN_\theta(\boldX) - \boldF^{(arm)})$\;
    }{
        $\boldS^{(arm)} \gets \boldS^{(arm)} + \tfrac{1}{N_{arm}}(\boldX - \boldS^{(arm)})$\;
    }
}

%---------------- Main Loop ----------------%
\tcp{Inputs: $\{ \mathcal{D}_i \}_{i=1}^K$ Logged data, $\{ \mu_i \}_{i=1}^K$ are inferred means, \text{use\_NN} is a boolean flag, $NN_\theta$ is the reward model}
\KwIn{$K, d, T, \{\mathcal{D}_i\}_{i=1}^K, \{ \mu_i \}_{i=1}^K, \texttt{use\_NN}$}
\KwOut{Perturbation $\delta$}
\tcp{$N$: counts, $\boldS$: sample mean, $\boldG$: mean gradient, $\boldF$: mean network output, $\mathcal{C}$: constraints}
Initialize $N, \boldS, \boldG, \boldF \gets 0$\;
Initialize $\mathcal{C} \gets \emptyset$, $\delta \gets 0$\;

\For{$t \gets 1$ \textbf{to} $T$}{
    $arm, do\_attack \gets$ \FSelectArm{$t, K$}\;

    \If{$do\_attack$}{
        $\delta \gets$ \FPert{$\boldS, \boldG, \boldF, \mu_1, \mathcal{C}, \text{use\_NN}$}\;
    }

    Pull $arm$, observe $\boldX \sim \mathcal{D}_{arm}$\;
    \FUpdate{$\boldX, \boldS^{(arm)}, \boldF^{(arm)}, \boldG^{(arm)}, \text{use\_NN}$}\;
}
\Return{$\delta$}
\end{algorithm}

\clearpage

\section{The Use of Large Language Models (LLMs)}
We used Large Language Models (LLMs) only in a limited way, specifically for minor writing polish and phrasing suggestions.

\end{document}